\def\eqref#1{equation~\ref{#1}}
\def\1{\bm{1}}
\DeclareMathAlphabet{\mathsfit}{\encodingdefault}{\sfdefault}{m}{sl}
\SetMathAlphabet{\mathsfit}{bold}{\encodingdefault}{\sfdefault}{bx}{n}
\definecolor{gray}{RGB}{150, 150, 150}
\definecolor{orange}{RGB}{255, 97, 0}
\definecolor{darkblue}{RGB}{25, 25, 112}
\def\Algnamea{\underline{A}ctive \underline{P}enalization \underline{O}f \underline{D}iscrimination}
\def\Algnameb{Active Penalization Of Discrimination}
\def\Algnameabbr{APOD}
\title{Mitigating Algorithmic Bias with Limited Annotations}
\author{Guanchu Wang\inst{1} \and Mengnan Du\inst{2} \and Ninghao Liu\inst{3} \and Na Zou\inst{4} \and Xia Hu\inst{1}}
\authorrunning{Guanchu Wang \and Mengnan Du \and Ninghao Liu \and Na Zou \and Xia Hu}
\institute{Department of Computer Science, Rice University, USA \and Department of Computer Science and Engineering, New Jersey Institute of Technology, USA \and Department of Department of Computer Science, University of Georgia, USA \and Department of Engineering Technology and Industrial Distribution, Texas A\&M University, USA}
\begin{document}

\maketitle

\begin{abstract}

Existing work on fairness modeling commonly assumes that sensitive attributes for all instances are fully available, which may not be true in many real-world applications due to the high cost of acquiring sensitive information. When sensitive attributes are not disclosed or available, it is needed to manually annotate a small part of the training data to mitigate bias. However, the skewed distribution across different sensitive groups preserves the skewness of the original dataset in the annotated subset, which leads to non-optimal bias mitigation. To tackle this challenge, we propose \Algnamea{}~(\Algnameabbr{}), an interactive framework to guide the limited annotations towards maximally eliminating the effect of algorithmic bias. The proposed \Algnameabbr{} integrates discrimination penalization with active instance selection to efficiently utilize the limited annotation budget, and it is theoretically proved to be capable of bounding the algorithmic bias. According to the evaluation on five benchmark datasets, \Algnameabbr{} outperforms the state-of-the-arts baseline methods under the limited annotation budget, and shows comparable performance to fully annotated bias mitigation, which demonstrates that \Algnameabbr{} could benefit real-world applications when sensitive information is limited. 
The source code of the proposed method is available at:
\href{https://anonymous.4open.science/r/APOD-fairness-4C02}{{\color{darkblue} \texttt{https://anonymous.4open.science/r/APOD-fairness-4C02}}}.

\end{abstract}

\section{Introduction}
\label{sec:intro}

Although deep neural networks (DNNs) have been demonstrated with great performance in many real-world applications, it shows discrimination towards certain groups or individuals ~\cite{caton2020fairness,tolmeijer2020implementations,rajkomar2018ensuring,bobadilla2020deepfair}, especially in high-stake applications, e.g., loan approvals~\cite{steel2010web}, policing~\cite{goel2016precinct}, targeted advertisement~\cite{sweeney2013discrimination}, college admissions~\cite{zimdars2010fairness}, or criminal risk assessments~\cite{angwin2016there}.
Social bias widely exists in many real-world data~\cite{mehrabi2021survey,chen2018my,li2019repair,chuang2021fair}. 
For example, the Adult dataset~\cite{UCI:2007} contains significantly more low-income female instances than males.
Recent studies revealed that training a DNN model on biased data may inherit and even amplify the social bias and lead to unfair predictions in downstream tasks~\cite{dwork2012fairness,creager2019flexibly,sun2019mitigating,kusner2017counterfactual,dai2021say}.


The problem of bias mitigation is challenging due to the skewed data distribution~\cite{hashimoto2018fairness,azzalini1996multivariate,azzalini2005skew} across different demographic groups. For example, in the Adult dataset, instances of female with high income are significantly less than the ones with low income~\cite{UCI:2007}. Also, in the German credit dataset, the majority of people younger than 35 show a bad credit history~\cite{UCI:2017}.
The effect of the skewed distribution on model fairness is illustrated in a binary classification task (e.g. positive class denoted as gray + and $\bullet$, negative class as red \textcolor{red}{+} and \textcolor{red}{$\bullet$}) with two sensitive groups (e.g. group 0 denoted as + and \textcolor{red}{+}, group 1 as $\bullet$ and \textcolor{red}{$\bullet$}) shown in Figure~\ref{fig:unfair_vs_rs_vs_APD}.
In Figure~\ref{fig:unfair_vs_rs_vs_APD}~(a), the positive instances~(+) are significantly less than negative instances~(\textcolor{red}{+}) in group 0, which leads to a classification boundary deviating from the fair one.
Existing work on fairness modeling can be categorized into two groups with or without sensitive attributes~\cite{du2020fairness,kleinberg2018algorithmic}. The first group relied on full exposure of sensitive attributes in training data, such as Fair Mixup~\cite{chuang2021fair}, FIFA~\cite{deng2022fifa}, Fair Rank~\cite{mehrotrafair}, and Group DRO~\cite{sagawa2019distributionally}. However, the sensitive information may not be disclosed or available in some real world scenarios~\cite{zhao2021you,kallus2021assessing}, and the cost of annotating sensitive attributes by experts is high~\cite{anahideh2020fair}, which leads to the limited applications of this group of work to the real-world scenarios.



The second group of work formulates the fairness without dependency on sensitive information, such as SS-FRL~\cite{chaiself}, FKD~\cite{chaifairness}, and LfF~\cite{nam2020learning}.  
However, those works rely on heuristic clustering of training instances to form potential demographic groups for the bias mitigation, which may deteriorate the fairness performance to some extent~\cite{wang2020robust}. 
To tackle the issue, some work involves the human expert providing a partially annotated dataset for the bias mitigation~\cite{anahideh2020fair}.
However, only a small portion of the dataset is annotated due to the limitation of human labor efforts.
An intuitive solution is to randomly select a small portion of instances for annotation and target semi-supervised bias mitigation~\cite{zhangfairness}.
However, as shown in Figure~\ref{fig:unfair_vs_rs_vs_APD}~(b), the randomly selected instances will follow the same skewed distribution across sensitive groups, which still preserves the bias information in the classifier. 
In such a manner, it is highly likely to achieve a non-optimal solution, which is fair only on the annotated dataset but not the entire dataset. 
Therefore, it is needed to have a unified framework, which integrates the selection of a representative subset for annotation with model training towards the global fairness~\cite{abernethy2022active}, as shown in Figure~\ref{fig:unfair_vs_rs_vs_APD}~(c).

\begin{figure*}[t]
\setlength{\abovecaptionskip}{1mm}
\setlength{\belowcaptionskip}{-5mm}
    \centering
    \subfigure[]{
    \centering
		\begin{minipage}[t]{0.31\linewidth}
			\includegraphics[width=0.99\linewidth]{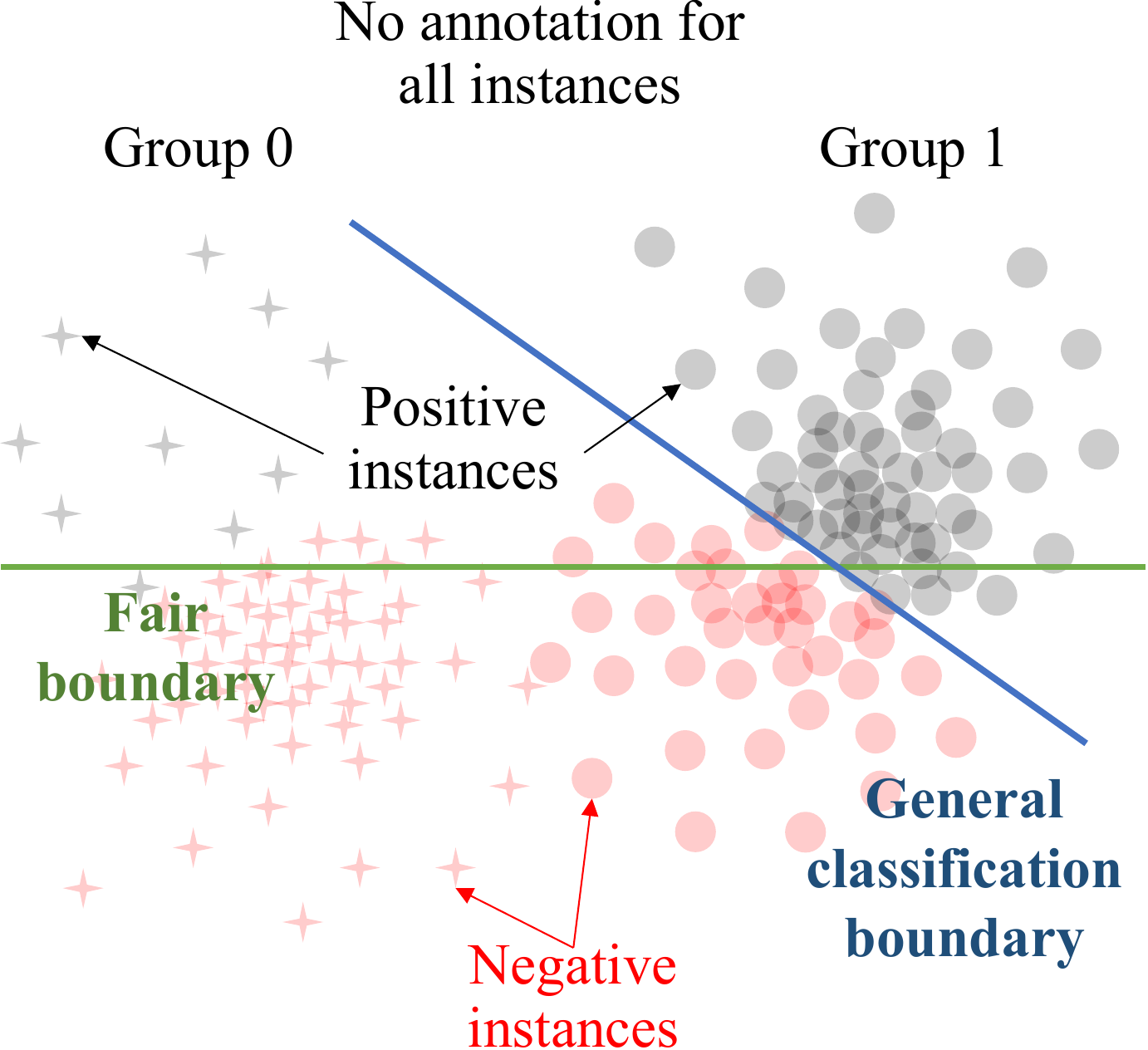}
		\end{minipage}%
	}
	\subfigure[]{
	\centering
		\begin{minipage}[t]{0.31\linewidth}
			\includegraphics[width=0.99\linewidth]{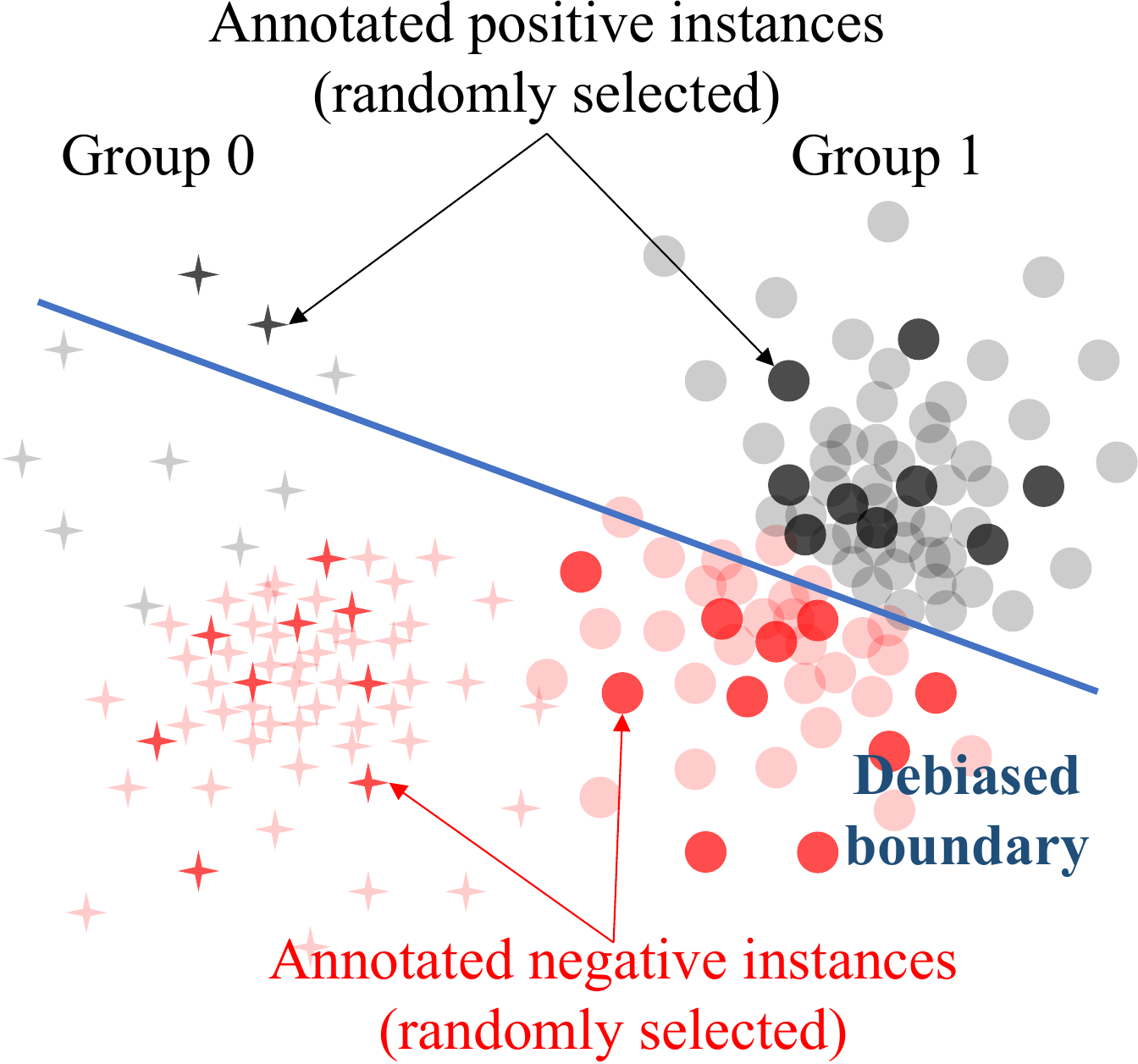}
		\end{minipage}%
	}
	\subfigure[]{
	\centering
		\begin{minipage}[t]{0.32\linewidth}
			\includegraphics[width=1.0\linewidth]{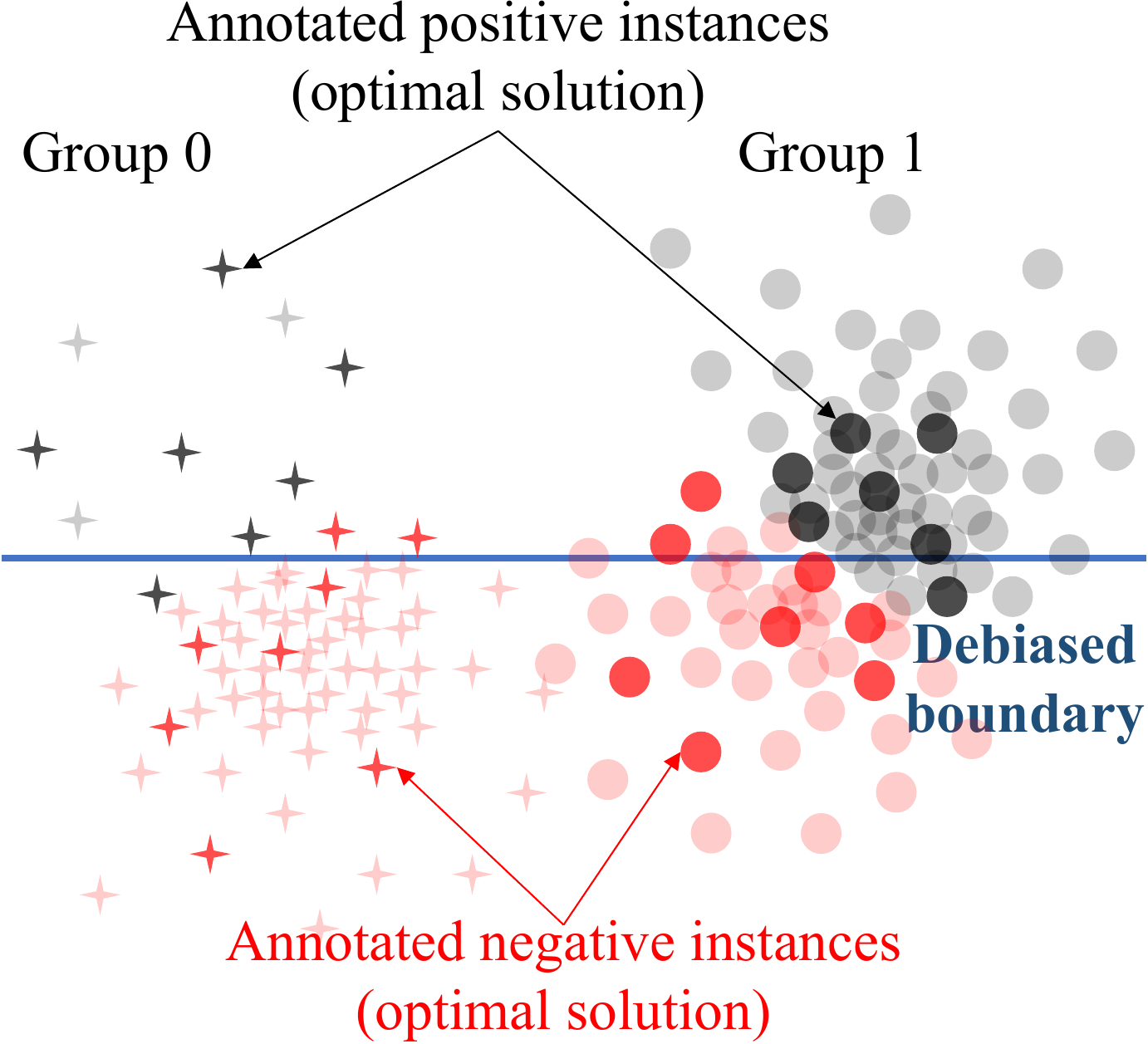}
		\end{minipage}
	}
    \caption{\small (a) The general classification boundary without bias mitigation deviates from the fair boundary due to the skewed distribution across four underlying subgroups~(i.e. +, \textcolor{red}{+}, $\bullet$ and \textcolor{red}{$\bullet$}). 
    (b) The annotation budget is set as 30.
    The randomly annotated data subset follows the same skewed distribution across the subgroups. The classification model is still unfair on the entire dataset.
    (c) With the same annotation budget, the optimal solution should select a more representative subset, which mitigates algorithmic bias on the entire dataset.
    }
    \label{fig:unfair_vs_rs_vs_APD}
\end{figure*}

In this work, we propose \Algnamea{} (\Algnameabbr{}), a novel interactive framework which integrates the penalization of discrimination with active instance selection, for bias mitigation in the real-world scenarios where sensitive information is limited.
Specifically, \Algnameabbr{} iterates between the model debiasing and active instance selection to gradually approach the global fairness.
For debiasing the model, \Algnameabbr{} enables bias penalization in an end-to-end manner via adopting a fairness regularizer.
In the active instance selection, an annotated data subset is constructed via recursive selections of representative data instances from the subgroup where the model shows the worst performance, such that it can maximally expose the existing bias of the model for subsequent debiasing.
Finally, we provide theoretical and experimental analysis to demonstrate the effectiveness of \Algnameabbr{}.
Overall, the contributions of this work are summarized as follows:
\begin{itemize}[leftmargin=15pt, topsep=1mm]
    
    \item[$\bullet$]  We propose an interactive framework \Algnameabbr{} to integrate the bias mitigation with efficient active instance selection when the annotation of sensitive attributes is very limited. 
    
    
    \item[$\bullet$]  We propose the relaxed reformulation of the fairness objective, and theoretically prove that \Algnameabbr{} could improve model fairness via bounding the relaxed fairness metric.
     
    
    \item[$\bullet$]  The effectiveness of \Algnameabbr{} is thoroughly demonstrated by the experiments on five benchmark datasets, which shows \Algnameabbr{} is competitive with state-of-the-art methods using fully disclosed sensitive attributes.
    
\end{itemize}

\section{Preliminaries}

In this section, we first introduce the notations used in this work, and give the problem definition of bias mitigation in the active scenario. 
Then, we introduce the fairness metrics.




\subsection{Notation and Problem Definition}

Without loss of generality, we follow the existing work~\cite{chuang2021fair,zhang2018mitigating,lahoti2020fairness} to consider a classification task in this work.
Specifically, we aim to learn a DNN classifier $f$ with the input feature $\boldsymbol{x} \in \mathcal{X}$, label $y \in \mathcal{Y} = \{ 0,1 \}$ and sensitive attribute $a \in \mathscr{A} = \{ 0,1 \}$, where $\mathcal{X}$ and $\mathcal{Y}$ denote the feature and label space, respectively.
The instances with sensitive attribute $A=0$ and $A=1$ belong to the unprivileged and privileged groups, respectively. 
Let $\mathscr{D} = \{ (\boldsymbol{x}_i, y_i) \mid 1 \leq i \leq N \}$ denote the entire dataset, which consists of the annotated set $\mathcal{S} = \{ (\boldsymbol{x}_i, y_i, a_i) \}$ and unannotated set $\mathscr{U} = \{ (\boldsymbol{x}_i, y_i) \}$, i.e., the value of the sensitive attribute is known for instances in $\mathcal{S}$, but it is unknown for instances in $\mathscr{U}$.
The proposed interactive bias mitigation is illustrated in Figure~\ref{fig:sys_config}~(a).
Specifically, in each iteration, an instance $(\boldsymbol{x}^*, y^*)$ is selected from unannotated dataset $\mathscr{U}$ for human experts; 
the experts essentially do the job of mapping $\mathcal{X} \times \mathcal{Y} \to \mathcal{X} \times \mathcal{Y} \times \mathscr{A}$, by providing the annotation of sensitive attribute $a^*$ for the selected instance $(\boldsymbol{x}^*, y^*)$. 
After that, the classifier is updated and debiased using the partially annotated dataset including the newly annotated instance $(\boldsymbol{x}^*, y^*, a^*)$, 
where the new classifier will then be involved for the instance selection in the next iteration.
This loop terminates if the human-annotation budget is reached.


Such an active scenario to debias $f$ is time-consuming for deep neural networks, due to the retraining of $f$ in each iteration. 
To improve the efficiency of learning, the classifier $f$ is split into body $f_b: \mathcal{X} \!\to\! \mathbb{R}^M$ and head $f_h: \mathbb{R}^M \!\to\! \mathbb{R}^{|\mathcal{Y}|}$, where the body $f_b$ denotes the first several layers, and the head $f_h$ denotes the remaining layers of the classifier such that $\hat{y}_i \!=\! \mathop{\arg\max} \{ f_h( f_b(\boldsymbol{x}_i | \theta_b) | \theta_h)\}$.
The body $f_b$ learns the instance embedding $\boldsymbol{h}_i = f_b(\boldsymbol{x}_i | \theta_h)$, where $\boldsymbol{h}_i \in \mathbb{R}^M$ denotes the embedding of $\boldsymbol{x}_i$, and $M$ denotes the dimension of embedding space. 
The head $f_h$ contributes to fair classification via having $\hat{y}_i = \mathop{\arg\max} \{f_h(\boldsymbol{h}_i | \theta_h)\}$, where $f_h(\boldsymbol{h}_i | \theta_h) \in \mathbb{R}^{|\mathcal{Y}|}$ and $\hat{y}_i \in \mathcal{Y}$.
Instead of updating the entire classifier, the classifier body $f_b$ is pretrained and fixed during the bias mitigation, where $f_b$ is pretrained to minimize the cross-entropy loss without annotations of sensitive attributes.
In such a manner, the mitigation of unfairness relies on debiasing the classifier head $f_h$.
This strategy with a fixed classifier body during the bias mitigation has been proved to be effective enough in existing works~\cite{du2021fairness,slack2020fairness}.



\subsection{Fairness Evaluation Metrics}
\label{sec:fairness_evaluation}
In this work, we follow existing work~\cite{mehrabi2021survey,du2021fairness} to consider two metrics to evaluate fairness: Equality of Opportunity~\cite{hardt2016equality,verma2018fairness} and Equalized Odds~\cite{romano2020achieving,verma2018fairness}. These two metrics are measured based on the true positive rate $\text{TPR}_{A=a} \!=\! \mathbb{P} (\hat{Y} \!=\! 1 | A \!=\! a, Y \!=\! 1)$ and the false positive rate $\text{FPR}_{A=a} \!=\! \mathbb{P} (\hat{Y} \!=\! 1 | A \!=\! a, Y \!=\! 0)$ for $a \!\in\! \mathcal{A}$.

\paragraph{\textbf{Equality of Opportunity}} requires the unprivileged group ($A=0$) and privileged groups ($A=1$) have equal probability of an instance from the positive class being assigned to positive outcome, which is defined as $\mathbb{P}(\hat{Y}=1 | A=0, Y=1) = \mathbb{P}(\hat{Y}=1 | A=1, Y=1)$.
In this work, we apply EOP given as follows to evaluate Equality of Opportunity,
\begin{equation}
\begin{aligned}
\label{eq:EOP}
    \text{EOP} = \frac{\text{TPR}_{A=0}}{\text{TPR}_{A=1}} = \frac{\mathbb{P}(\hat{Y}=1 \mid A=0, Y=1)}{\mathbb{P}(\hat{Y}=1 \mid A=1, Y=1)}.
\end{aligned}
\end{equation}


\paragraph{\textbf{Equalized Odds}} expects favorable outcomes to be independent of the sensitive attribute, given the ground-truth prediction, which can be formulated as $\mathbb{P}(\hat{Y} \!=\! 1 | A \!=\! 0, Y \!=\! y) \!=\! \mathbb{P}(\hat{Y} \!=\! 1 | A \!=\! 1, Y \!=\! y)$ for $y \!\in\! \mathcal{Y}$.
To evaluate Equalized Odds, $\Delta \text{EO}$ combines the difference of TPR and FPR across two sensitive groups as 
\begin{equation}
\begin{aligned}
\label{eq:EO}
    \Delta \text{EO} = \Delta \text{TPR} + \Delta \text{FPR},
\end{aligned}
\end{equation}
where $\Delta \text{TPR} = \text{TPR}_{A=0} - \text{TPR}_{A=1}$ and $\Delta \text{FPR} = \text{FPR}_{A=0} - \text{FPR}_{A=1}$. 
Under the above definitions, EOP close to 1 and $\Delta \text{EO}$ close to 0 indicate fair classification results.

\begin{figure}[t]
\setlength{\abovecaptionskip}{-2mm}
\setlength{\belowcaptionskip}{-2mm}
    \centering
    \begin{minipage}{0.4\linewidth}
    \centering
    \subfigure[]{
    \centering
    \includegraphics[width=1.0\textwidth]{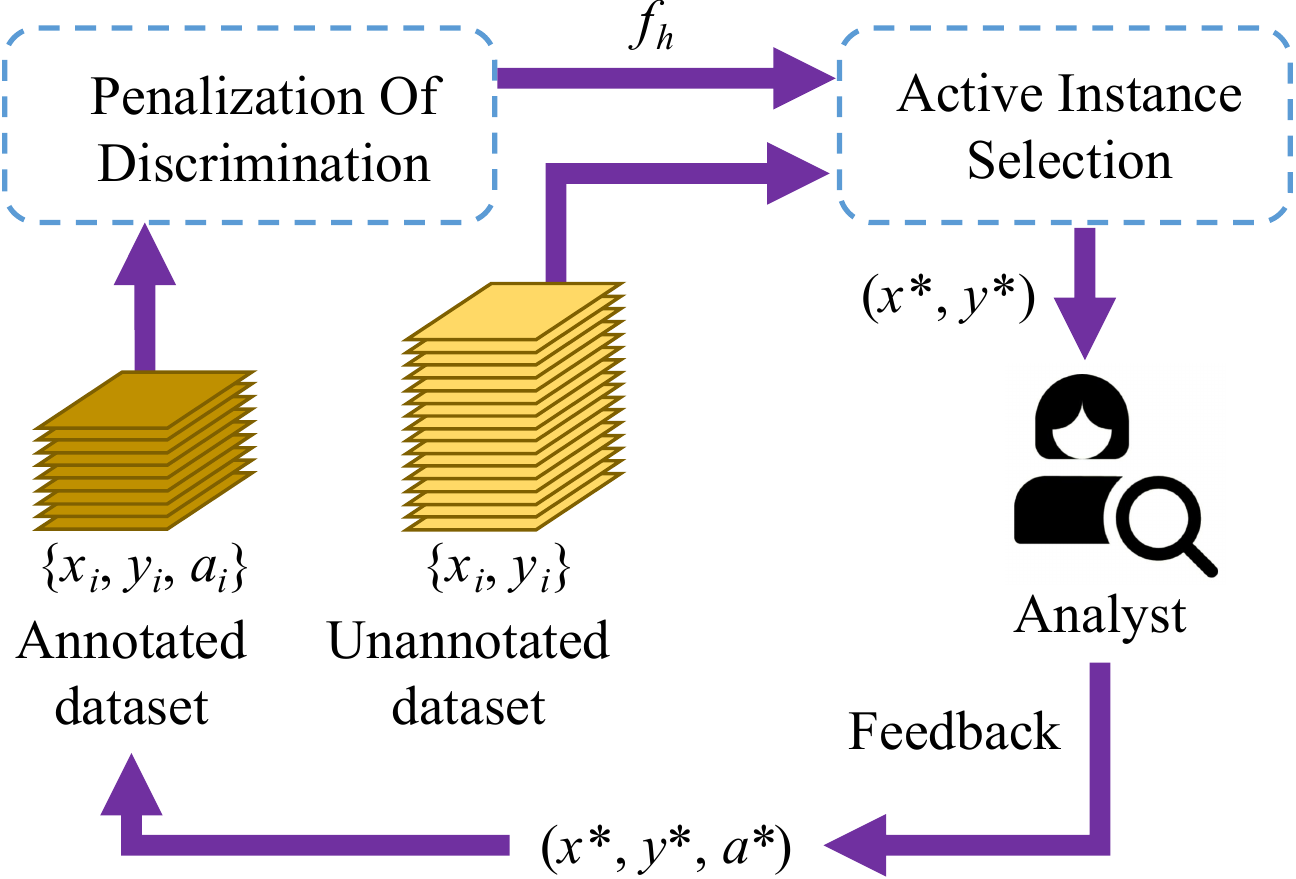}}
    \end{minipage}
    \hspace{5pt}
    \text{\qquad}
    \begin{minipage}{0.23\linewidth}
    \centering
    \subfigure[]{
    \centering
    \includegraphics[width=1.0\textwidth]{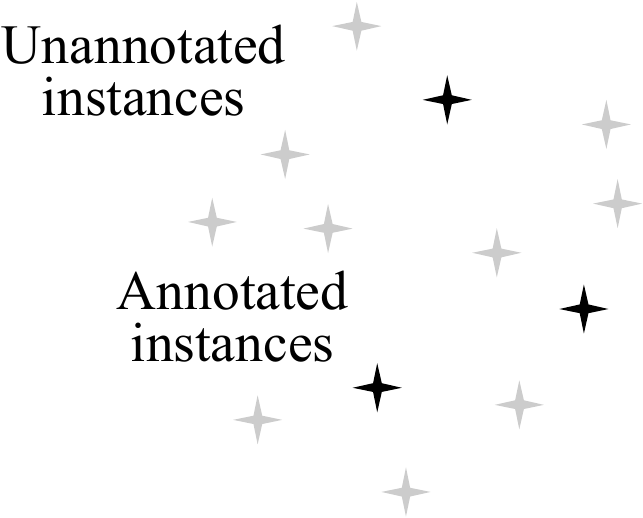}}
    \end{minipage}
    \hspace{5pt}
    \begin{minipage}{0.23\linewidth}
    \centering
    \subfigure[]{
    \centering
    \includegraphics[width=1.0\textwidth]{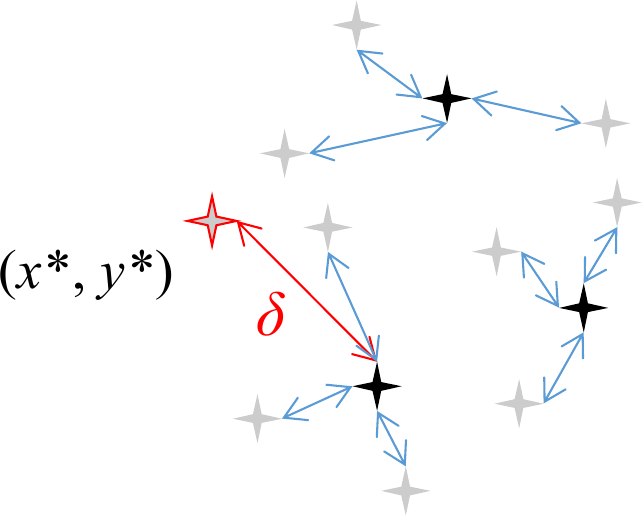}}
    \end{minipage}
   
    \caption{\small \label{fig:sys_config} (a) The APOD pipeline alternates between POD and AIS.
    (b) Individual selection: The annotated and unannotated instances from subgroup $\mathscr{U}_{\tilde{a}}^{\tilde{c}}$, where $\tilde{a}=0$ and $\tilde{c}=1$.
    (c) Each unannotated instance is connected to an annotated instance determined by $\min_{\boldsymbol{x}_j \in \mathcal{S}} || \boldsymbol{h}_i - \boldsymbol{h}_j ||_2$ (marked as blue arrows). The red \textcolor{red}{$\delta$} denotes the largest distance pair which selects the best candidate for annotation.  
    }
    
\end{figure}

\section{\Algnameb{}}

In this section, we introduce the \Algnamea{} (\Algnameabbr{}) framework to mitigate algorithmic bias under a limited annotation budget.
As shown in Figure \ref{fig:sys_config}~(a), \Algnameabbr{} integrates Penalization Of Discrimination~(POD) and Active Instance Selection~(AIS) in a unified and iterative framework.
Specifically, in each iteration, POD focuses on debiasing the classifier head $f_h$ on the partially annotated dataset $\{ (\boldsymbol{x}_i, y_i, a_i) \in \mathcal{S} \}$ and $\{ (\boldsymbol{x}_i, y_i) \in \mathscr{U} \}$, 
while AIS selects the optimal instance $(\boldsymbol{x}^*, y^*)$ from the unannotated dataset $\mathscr{U}$ that can further promote bias mitigation.
Sensitive attributes of the selected instances will be annotated by human experts: $(\boldsymbol{x}^*, y^*)  \to  (\boldsymbol{x}^*, y^*, a^*)$.
After that, these instances will be moved from the unannotated dataset $\mathscr{U}\leftarrow\mathscr{U}\setminus \{ (\boldsymbol{x}^*, y^*) \}$ to the annotated dataset $\mathcal{S} \leftarrow \mathcal{S} \cup \{ (\boldsymbol{x}^*, y^*, a^*) \}$ for debiasing the classifier in the next iteration.
The POD and AIS are introduced as follows. 


\subsection{Penalization Of Discrimination~(POD)}
\label{sec:PD}
 
POD learns a fair classifier head $f_h$ via bias penalization on both annotated instances $\{(\boldsymbol{x}_i, y_i, a_i) \in \mathcal{S}\}$ and unannotated instances $\{(\boldsymbol{x}_i, y_i) \in \mathscr{U}\}$.
To be concrete, POD considers a regularization term, consisting of the true and false positive rate difference\footnote{The combination of TPR and FPR is representative enough accross different fairness metrics. POD is flexible to use other metrics as the regularizer for the bias mitigation.}, to balance the model performance on different subgroups.
In this way, given $\boldsymbol{h}_i = f_b(\boldsymbol{x}_i | \theta_b)$, $f_h$ is updated to minimize the hybrid loss function given by
\begin{equation}
\begin{aligned}
\label{eq:non_diff_loss}
L = \sum_{i=1}^N l(\boldsymbol{h}_i, y_i; \theta_h) + \lambda ( \Delta \text{TPR}^2 + \Delta \text{FPR}^2 ),
\end{aligned}    
\end{equation}
where $l(\boldsymbol{h}_i, y_i; \theta_h)$ denotes the cross-entropy loss, and the term $\Delta \text{TPR}^2 + \Delta \text{FPR}^2$ penalizes the bias in $f_h$ to improve fairness, controlled by the hyper-parameter~$\lambda$.

However, Equation~(\ref{eq:non_diff_loss}) is not feasible to debias $f_h$ in an end-to-end manner, since neither $\text{TPR}$ nor $\text{FPR}$ is differentiable with respect to the parameters $\theta_h$.
It is thus necessary to reformulate $\Delta \text{TPR}$ and $\Delta \text{FPR}$, which involves the parameterization of true and false positive rate with respect to $\theta_h$, respectively.
For notation convenience and without the loss of generality, we unify the formulation of true and false positive rates by
\begin{equation}
\begin{aligned}
    \label{eq:false_prediction_rete}
    p_a(y, c) &= \mathbb{P} (\hat{Y}=c \mid Y=y, A = a),
\end{aligned}
\end{equation}
where we take $y=1,c=1$ to have $p_a(1,  1) = \text{TPR}_{A=a}$ and $y=0,c=1$ to have $p_a(0,  1) = \text{FPR}_{A=a}$.
To parameterize $p_a(y,  c)$ with respect to $\theta_h$, we reformulate it as follows
\begin{align}
\label{eq:parameterize_FPR_2}
p_a(y, c) &= \frac{\sum_{(\boldsymbol{x}_i, y_i, a_i) \in \mathcal{S}_a^y} \boldsymbol{1}_{\hat{y}_i = c}}{|\mathcal{S}_a^y|}
= \frac{\sum\limits_{(\boldsymbol{x}_i, y_i, a_i) \in \mathcal{S}_a^y}  \text{sgn}( f^{c}_h( \boldsymbol{h}_i ) - f^{1-c}_h( \boldsymbol{h}_i ) ) }{|\mathcal{S}_a^y|}
\\
\label{eq:parameterize_FPR_3}
&\approx \frac{ \sum\limits_{(\boldsymbol{x}_i, y_i, a_i) \in \mathcal{S}_a^y}  \lambda ( f^{c}_h( \boldsymbol{h}_i ) - f^{1-c}_h( \boldsymbol{h}_i ) ) }{|\mathcal{S}_a^y|}
\triangleq \lambda \tilde{p}_a(y, c),
\end{align}
where $\text{sgn}(x)=0$ for $x<0$ and $\text{sgn}(x)=1$ for $x\geq0$. 
Here we relax $\text{sgn}(x)$ with a linear function\footnote{It also has other choices for the relaxation, e.g. sigmoid and tanh functions. The linear function is chosen for simplicity.} $\lambda x$ in the approximation of Equation~(\ref{eq:parameterize_FPR_2}) to make $p_a(y, c)$ differentiable with respect to $\theta_h$; $\mathcal{S}_a^y = \{ (\boldsymbol{x}_i, y_i, a_i) \in \mathcal{S} \mid a_i = a, y_i = y \}$ for $a \!\in\! \mathcal{A}, y \!\in\! \mathcal{Y}$; and $f^i_h(\boldsymbol{h})$ denotes element $i$ of $f_h(\boldsymbol{h})$ for $i \!\in\! \mathcal{Y}$. 
Based on the relaxed regularization term, $f_h$ is updated to minimize the loss function given by
\begin{equation}
\begin{aligned}
\label{eq:APD_loss_function}
L = \frac{1}{N}  \sum_{i=1}^N l(\boldsymbol{h}_i,  y_i; \theta_h) + \lambda  \sum_{y \in \mathcal{Y}}  
\big[ \tilde{p}_0(y,  1) - \tilde{p}_1(y,  1) \big]^2 ,
\end{aligned}    
\end{equation}
where the estimation of cross-entropy  $\frac{1}{N} \sum_{i=1}^N l(\boldsymbol{h}_i, y_i; \theta_h)$ includes both annotated and unannotated instances; the regularization term $[ \tilde{p}_0(y, 1) - \tilde{p}_1(y, 1) ]^2$ is calculated using the annotated instances; and the hyper-parameter $\lambda$ controls the importance of regularization.

\subsection{Active Instance Selection~(AIS)}
\label{sec:dis}

In each iteration, AIS selects instances from the unannotated dataset $\mathscr{U}$ to annotate the sensitive attribute values.
The newly annotated instances are merged with the dataset for debiasing the classifier head in subsequent iterations.
The AIS process consists of two steps: (1) \textbf{Group selection} is to select the subgroup $\mathscr{U}_{\tilde{a}}^{\tilde{c}} = \{ (\boldsymbol{x}_i, y_i) \in \mathscr{U} \mid a_i = \tilde{a}, y_i = \tilde{c} \}$ on which the model has the worst performance; (2) \textbf{Individual selection} is to select the optimal instance from $\mathscr{U}_{\tilde{a}}^{\tilde{c}}$, which can mostly expose the existing bias
of the model for promoting the bias mitigation in the next iteration.

\paragraph{\textbf{Group Selection}} is motivated by the observation that adding more instances to the subgroup having the worst classification accuracy can improve the fairness by increasing its contribution to the average loss~\cite{hashimoto2018fairness,lahoti2020fairness}.
Specifically, for group selection, the unannotated dataset $\mathscr{U}$ is splitted into $\{\mathscr{U}_{a}^{c}\}_{a \in \mathcal{A}, c \in \mathcal{Y}}$, where $\mathscr{U}_{a}^{c} = \{ (\boldsymbol{x}_i, y_i) \in \mathscr{U} | a_i \!=\! a, y_i \!=\! c \}$ denotes a subgroup of unannotated instances. 
We estimate the classification accuracy $p_a(c, c)=\mathbb{P}(\hat{Y}=c|A=a,Y=c)$ to evaluate $f$ on each subgroup $\mathscr{U}_{a}^{c}$ for $a \in \mathcal{A}$ and $c \in \mathcal{Y}$, respectively, following Equation (\ref{eq:false_prediction_rete}).
In this way, the subgroup $\mathscr{U}_{\tilde{a}}^{\tilde{c}} = \{ (\boldsymbol{x}_i, y_i) \!\in\! \mathscr{U} | a_i = \tilde{a}, y_i = \tilde{c} \}$ which suffers from the worst accuracy is selected by
\begin{equation}
\label{eq:group_selection}
\begin{aligned}
\tilde{a}, \tilde{c} = \mathop{\arg\min}_{a \in \mathcal{A}, c \in \mathcal{Y}} p^*_a(c, c),
\end{aligned}
\end{equation}
where $p^*_a(c, c) = p_a(c, c) - (p_0(c, c) + p_1(c, c))/2$ denotes the centralized classification accuracy after considering the performance divergence of the classifier on different classes. 
For example, in Figure~\ref{fig:unfair_vs_rs_vs_APD}~(b), we select the subgroup with the worst accuracy $\mathscr{U}_{0}^{1}$ which corresponds to the positive instances from group 0, due to the fact that $p^*_0(1,1) < p^*_0(0,0), p^*_1(0,0), p^*_1(1,1)$.

Note that $p^*_a(c, c)$ cannot be estimated without the annotations of sensitive attribute. 
We thus learn another classifier head $f_a: \mathbb{R}^M \!\to\! \mathbb{R}^{|\mathscr{A}|}$ to predict the sensitive attribute $\hat{a} \!=\! \mathop{\arg\max} f_a(\boldsymbol{h}_i | \theta_a)$ for the unannotated instances $\boldsymbol{x}_i \!\in\! \mathscr{U}$, where $f_a$ is updated on the annotated set $\mathcal{S}$ by minimizing the cross-entropy loss
\begin{equation}
    \theta^*_a = \frac{1}{|\mathcal{S}|} \sum_{(\boldsymbol{x}_i, y_i, a_i) \in \mathcal{S}} l(\boldsymbol{h}_i, a_i; \theta_a).
\end{equation}

\paragraph{\textbf{Individual Selection}} aims to proactively select the most representative instances for annotation, which can maximally promote bias mitigation. Since the classifier $f$ has the worst accuracy on subgroup $\mathscr{U}_{\tilde{a}}^{\tilde{c}}$, reducing the classification error on $\mathscr{U}_{\tilde{a}}^{\tilde{c}}$ would improve fairness, where $\tilde{a}$ and $\tilde{c}$ are chosen through group selection in Equation~(\ref{eq:group_selection}).
The strategy of individual selection is to expand the annotated dataset to reduce $\delta$-cover of subgroup $\mathscr{U}_{\tilde{a}}^{\tilde{c}}$~\cite{sener2018active}.
Specifically, the annotated dataset $\mathcal{S}$ enables $\delta$-cover of the entire dataset $\mathscr{D}$ if $\forall \boldsymbol{x}_i \in \mathscr{D}$, $\exists \boldsymbol{x}_j \in \mathcal{S}$ such that $|| \boldsymbol{x}_i - \boldsymbol{x}_j ||_2 \leq \delta$, where $\delta$ denotes the coverage radius given by 
\begin{equation}
\begin{aligned}
\label{eq:delta_cover}
\delta = \max_{\boldsymbol{x}_i \in \mathscr{D}} \min_{\boldsymbol{x}_j \in \mathcal{S}} || \boldsymbol{x}_i - \boldsymbol{x}_j ||_2.
\end{aligned}
\end{equation}
Furthermore, it is observed that the generalization error of a model approaches the training error\footnote{The training error is less than generalization error in most cases.} if the coverage radius $\delta$ is small~\cite{sener2018active}.
Following such scheme, we select the instance in subgroup $\mathscr{U}_{\tilde{a}}^{\tilde{c}}$, which could decrease $\delta$-coverage to reduce the classification error on $\mathscr{U}_{\tilde{a}}^{\tilde{c}}$. 
To be concrete, the distance between $\boldsymbol{x}_i$ and $\boldsymbol{x}_j$ is measured by $|| \boldsymbol{h}_i - \boldsymbol{h}_j ||_2$, where $\boldsymbol{h}_i=f_b(\boldsymbol{x}_i | \theta_b)$ and $\boldsymbol{h}_j=f_b(\boldsymbol{x}_j | \theta_b)$ are the embeddings of $\boldsymbol{x}_i$ and $\boldsymbol{x}_i$, respectively. 
We have the instance $(\boldsymbol{x}^*, y^*)$ selected for annotation following the max-min rule
\begin{equation}
\label{eq:individual_selection}
    (\boldsymbol{x}^*, y^*) = \mathop{\arg\max}_{(\boldsymbol{x}_i, y_i) \in \mathscr{U}_{\tilde{a}}^{\tilde{c}}} \min_{(\boldsymbol{x}_j, y_j) \in \mathcal{S}}  || \boldsymbol{h}_i - \boldsymbol{h}_j ||_2.
\end{equation}
The individual selection strategy is illustrated in Figures~\ref{fig:sys_config}~(b) and (c), where $\delta$ reduction guides the individual selection. The candidate instances in $\mathscr{U}_{\tilde{a}}^{\tilde{c}}$ and annotated instances are shown in Figure~\ref{fig:sys_config}~(b). The distances between each candidate instance and annotated instances are measured in embedding space $|| \boldsymbol{h}_i - \boldsymbol{h}_j ||_2$, where the minimal one is marked as a blue arrow. 
The red instance marked by $(x^*, y^*)$ in Figure~\ref{fig:sys_config}~(c) indicates the best candidate to be annotated.

\begin{multicols}{2}
\begin{algorithm}[H] 
\small
\caption{\text{\Algnameabbr{}{}}.}
\label{alg:APD}
\textbf{Input}:{\small initial annotated dataset $\mathcal{S}$} \\
\textbf{Output}:{\small classifier body $f_b$, head $f_h$}\!\!\!\!

{\small $\theta_b^*, \! \theta_h^* \!=\! \mathop{\arg\min} \sum_{i=1}^N \!\! l(\boldsymbol{x}_i, y_i; \theta_b, \theta_h)$}\!\!\!\!

\While{\textit{within budget limit}}{

\textcolor{gray}{\# Penalization Of Discrimination.}

$\theta^*_h = $ POD($\mathcal{S}$, $f_b$, $f_h$)

\textcolor{gray}{\# Active Instance Selection.}

$(\boldsymbol{x}^*, y^*) = \text{AIS}(f_b, f_h)$

$\mathcal{S} = \mathcal{S} \cup \{ (\boldsymbol{x}^*, y^*, a^*) \}$ and $\mathscr{U} = \mathscr{U} \setminus \{ (\boldsymbol{x}^*, y^*) \}$

}

\end{algorithm}

\begin{algorithm}[H] 
\setstretch{1.0}
\small
\caption{POD}
\label{alg:PD}
\textbf{Input}: annotated dataset $\mathcal{S}$, classifier body $f_b$, head $f_h$. \\
\textbf{Output}: fair classifier head $f_h^*$.


\While{\textit{not converged}}{

For $a \in \mathcal{A}$ and $y \in \mathcal{Y}$, estimate $\tilde{p}_a(y,1)$ given by Equation~(\ref{eq:parameterize_FPR_3}).

Update the classifier head $f_h$ to minimize the loss function in Equation~(\ref{eq:APD_loss_function}).

}
\Return $f_h^*$
\end{algorithm}

\end{multicols}

\subsection{The \Algnameabbr{} Algorithm}

The details of \Algnameabbr{} are summarized in Algorithm~\ref{alg:APD}.
Initially, \Algnameabbr{} learns the biased $f_b$ and $f_h$, and randomly samples a small set of annotated instances $\mathcal{S}$.
In each iteration, \Algnameabbr{} 
first learns $f_a$ to predict the sensitive attribute of unannotated instances;
then debiases $f_h$ via POD~(line~5);
after this, \Algnameabbr{} selects the optimal instance $(\boldsymbol{x}^*, y^*)$ for annotation via AIS~(line~6) and merges the selected instance with the annotated dataset~(line~7);
POD and AIS are given in Algorithms~\ref{alg:PD} and~\ref{alg:AIS}, respectively;
the iteration stops once the number of  annotated instance reaches the budget.


\subsection{Theoretical Analysis}

We theoretically investigate the proposed \Algnameabbr{} to guarantee that bias mitigation is globally achieved, as shown in Theorem~\ref{pp:upper_bound}. We then demonstrate the effectiveness of AIS~(both group selection and individual selection) in Remark~\ref{sec:rk1}.
The proof of Theorem~\ref{pp:upper_bound} is given in Appendix~\ref{sec:proof_theorem_appendix}.
\begin{theorem}
\label{pp:upper_bound}
Assume the loss value on the training set has an upper bound $\frac{1}{|\mathcal{S}|} \sum_{(\boldsymbol{x}_i, y_i, a_i) \in \mathcal{S}} l(\boldsymbol{h}_i, y_i;  \theta_h) \leq \epsilon$\footnote{\scriptsize $\epsilon$ can be very small if the classifier head $f_h$ has been well-trained on the annotated dataset $\mathcal{S}$.}, 
and $l(\boldsymbol{h}, y; \theta_h)$ and $f_h$ satisfy $K_l$- and $K_h$-Lipschitz continuity\footnote{\scriptsize $l(\boldsymbol{h}, y;  \theta_h)$ and $f_h$ satisfy $|l(\boldsymbol{h}_i, y; \theta_h) - l(\boldsymbol{h}_j, y; \theta_h)| \leq K_l ||\boldsymbol{h}_{i} - \boldsymbol{h}_{j}||_2$ and $|p(y | \boldsymbol{x}_i) - p(y | \boldsymbol{x}_j)| \leq K_h ||\boldsymbol{h}_{i} - \boldsymbol{h}_{j}||_2$, respectively, where the likelihood function $p(y \mid \boldsymbol{x}_i) = \text{softmax}(f_h(\boldsymbol{h}_i | \theta_h))$.}, respectively.
The generalization loss difference between the unprivileged group and the privileged group has the following upper bound with probability $1-\gamma$, 
\begin{align}
&\bigg| \! \int_{\mathcal{X}_0} \!\! \int_{\mathcal{Y}} \! p(\boldsymbol{x},  y) l(\boldsymbol{h},  y;  \theta_h) \mathrm{d}\boldsymbol{x} \mathrm{d}y \!-\!\! \int_{\mathcal{X}_1} \!\! \int_{\mathcal{Y}} \! p(\boldsymbol{x},  y) l(\boldsymbol{h},  y;  \theta_h) \mathrm{d}\boldsymbol{x} \mathrm{d}y \bigg|
\nonumber
\\
\label{eq:generation_upper_bound2}
&\quad\quad\quad\quad\quad\quad\quad\quad \leq \epsilon + \min \Big\{ \sqrt{-L^2 \log \gamma (2 N_{\tilde{a}})^{-1}},  (K_l + K_h L) \delta_{\tilde{a}} \Big\}, 
\end{align}
where $\tilde{a} \!=\! \mathop{\arg\max}_{a \in \mathcal{A}}  \int_{\mathcal{X}_a}   \int_{\mathcal{Y}}  p(\boldsymbol{x}, y) l(\boldsymbol{h}, y; \theta_h) \mathrm{d}\boldsymbol{x} \mathrm{d}y$; $\mathcal{X}_a \!=\! \{\boldsymbol{x}_i \in \mathscr{D} | a_i \!=\! a \}$;
$\delta_{\tilde{a}} \!=\! \max_{\boldsymbol{x}_i \in \mathcal{X}_{\tilde{a}}} \!\! \min_{(\boldsymbol{x}_j, y_j, a_j) \in \mathcal{S}} \!\! || \boldsymbol{h}_i \!-\! \boldsymbol{h}_j ||_2$;  $N_{\tilde{a}} \!=\! |\{ (\boldsymbol{x}_i, y_i, a_i) | a_i \!=\! \tilde{a}, (\boldsymbol{x}_i, y_i, a_i) \!\in\! \mathcal{S} \}|$;
$L \!=\! \max_{(\boldsymbol{x}_i, y_i) \in \mathscr{U}} l(\boldsymbol{h}_i, y_i; \theta_h)$; 
and $\boldsymbol{h}_i = f_b(\boldsymbol{x}_i | \theta_b)$.



\end{theorem}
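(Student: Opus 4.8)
The plan is to bound the two–group loss difference by the population risk of the single worst group, and then control that risk by a small empirical term plus one generalization gap, for which two independent estimates are available and the smaller is retained. Abbreviate the per–group generalization risk by $R_a = \int_{\mathcal{X}_a}\int_{\mathcal{Y}} p(\boldsymbol{x},y)\, l(\boldsymbol{h},y;\theta_h)\,\mathrm{d}\boldsymbol{x}\,\mathrm{d}y$, so that the target is $|R_0 - R_1|$ and $\tilde{a}=\argmax_a R_a$. Because the cross-entropy loss is nonnegative, $R_0,R_1\ge 0$, hence $|R_0-R_1| = R_{\tilde{a}} - R_{1-\tilde{a}} \le R_{\tilde{a}}$; this is the step that collapses the two–group comparison onto the worst group and explains why only a single gap term appears in Equation~(\ref{eq:generation_upper_bound2}). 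Writing $\hat{R}_{\tilde{a}} = \frac{1}{N_{\tilde{a}}}\sum_{(\boldsymbol{x}_i,y_i,a_i)\in\mathcal{S},\,a_i=\tilde{a}} l(\boldsymbol{h}_i,y_i;\theta_h)$ for the empirical risk on the annotated group-$\tilde{a}$ instances, the training-loss hypothesis (with the footnote that $f_h$ is well trained on $\mathcal{S}$) gives $\hat{R}_{\tilde{a}}\le\epsilon$, so I would split $R_{\tilde{a}}\le\epsilon + (R_{\tilde{a}}-\hat{R}_{\tilde{a}})$. It then remains to bound the generalization gap $R_{\tilde{a}}-\hat{R}_{\tilde{a}}$ in two ways.

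For the probabilistic estimate, I would regard the losses $\{l(\boldsymbol{h}_i,y_i;\theta_h): a_i=\tilde{a}\}$ as $N_{\tilde{a}}$ i.i.d. samples of a random variable valued in $[0,L]$, with the range fixed by $L=\max_{(\boldsymbol{x}_i,y_i)\in\mathscr{U}} l(\boldsymbol{h}_i,y_i;\theta_h)$; a one-sided Hoeffding inequality then gives $R_{\tilde{a}}-\hat{R}_{\tilde{a}}\le \sqrt{-L^2\log\gamma/(2N_{\tilde{a}})}$ with probability $1-\gamma$, by solving $\exp(-2N_{\tilde{a}}t^2/L^2)=\gamma$ for $t$.

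For the deterministic estimate I would invoke the $\delta$-cover: by definition of $\delta_{\tilde{a}}$, every group-$\tilde{a}$ point $\boldsymbol{x}$ admits an annotated neighbor $\boldsymbol{x}_j\in\mathcal{S}$ with $\|\boldsymbol{h}-\boldsymbol{h}_j\|_2\le\delta_{\tilde{a}}$, and I would bound the per-point integrand difference by decomposing $p(y\mid\boldsymbol{x})\,l(\boldsymbol{h},y)-p(y\mid\boldsymbol{x}_j)\,l(\boldsymbol{h}_j,y)$ as $p(y\mid\boldsymbol{x})[l(\boldsymbol{h},y)-l(\boldsymbol{h}_j,y)] + l(\boldsymbol{h}_j,y)[p(y\mid\boldsymbol{x})-p(y\mid\boldsymbol{x}_j)]$. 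The $K_l$-Lipschitz continuity of $l$ controls the first term by $K_l\delta_{\tilde{a}}$ (using $p\le 1$), and the $K_h$-Lipschitz continuity of the likelihood together with $l\le L$ controls the second by $K_hL\delta_{\tilde{a}}$, so that $R_{\tilde{a}}-\hat{R}_{\tilde{a}}\le (K_l+K_hL)\delta_{\tilde{a}}$. Taking the minimum of the two gap bounds and adding $\epsilon$ yields the claimed inequality.

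The main obstacle is this covering step: transferring a bound available only on the finite annotated sample to a population integral requires identifying the data distribution with the empirical measure over $\mathscr{D}$, so that $\delta_{\tilde{a}}$ (defined as a maximum over dataset points) genuinely controls every contributing point, and it requires pairing each population point with a nearest annotated neighbor while keeping the normalization and the label bookkeeping consistent — note that the covering comparison naturally produces $l(\boldsymbol{h}_j,y)$ evaluated at the population label $y$, whereas $\hat{R}_{\tilde{a}}$ is built from $l(\boldsymbol{h}_j,y_j)$ at the neighbor's own label, a discrepancy that must be absorbed. I would also take care that the per-group bound $\hat{R}_{\tilde{a}}\le\epsilon$ is legitimate, which holds cleanly when the training-loss hypothesis is read per group (as the well-trained-head footnote suggests) rather than only as an average over all of $\mathcal{S}$.
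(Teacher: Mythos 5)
Your proposal follows essentially the same route as the paper's proof: reduce the two-group difference to the worst group's risk via $|R_0-R_1|\le R_{\tilde a}$, bound that risk by the empirical term $\epsilon$ plus a generalization gap, estimate the gap once by Hoeffding's inequality and once by the $\delta$-cover combined with the $K_l$/$K_h$ Lipschitz cross-term decomposition, and take the minimum of the two. The bookkeeping issues you flag are real but are glossed over in the paper's own proof as well, which invokes the per-point bound $\int_{\mathcal{Y}} p(y\mid\mathcal{N}(\boldsymbol{x}_i))\, l(\boldsymbol{h}_i^{\mathcal{N}}, y;\theta_h)\,\mathrm{d}y \le \epsilon$ ``due to the upper bound of training error,'' i.e., it silently reads the average training-loss assumption in the same strengthened per-neighbor, population-label form that you identify as necessary.
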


In Theorem~\ref{pp:upper_bound}, the global fairness is formalized via considering the generalization error difference between the unprivileged and privileged group as the relaxed fairness metric, and  \Algnameabbr{} contributes to the global fairness via explicitly tightening the upper bound of the relaxed fairness metric.
We demonstrate the details that AIS can iteratively tighten the bound in Remark~\ref{sec:rk1}.

\begin{algorithm}[t] 
\setstretch{0.95}
\small
\caption{\small Active Instance Selection (AIS).}
\label{alg:AIS}
\textbf{Input}: classifier body $f_b$ and classifier head $f_h$. \\
\textbf{Output}: the selected instance $(\boldsymbol{x}^*, y^*)$.

Update $f_a$ to minimize $\frac{1}{|\mathcal{S}|} \sum_{(\boldsymbol{x}_i, y_i, a_i) \in \mathcal{S}} l(\boldsymbol{h}_i, a_i; \theta_a)$.

Estimate the sensitive attribute $\hat{a}_i = \mathop{\arg\max} f_a(\boldsymbol{h}_i \mid \theta_a)$ for $\boldsymbol{x}_i \in \mathscr{U}$.


For $a \in \mathcal{A}$ and $c \in \mathcal{Y}$, estimate the classification accuracy $p_a(c, c)=\mathbb{P}(\hat{Y}=c|\hat{A}=a,Y=c)$ on subgroup $\mathscr{U}_{a}^{c}$.

For $a \!\in\! \mathcal{A}$ and $c \!\in\! \mathcal{Y}$, centralize $p_a(c, c)$ into $p^*_a(c, c)$ by $p^*_a(c, c) = p_a(c, c) - \frac{p_0(c, c) + p_1(c, c)}{2}$.

Execute the group selection by $\tilde{a}, \tilde{c} = \mathop{\arg\min}_{a\in \mathcal{A}, c\in \mathcal{Y}} p^*_a(c, c)$.
%

Execute the individual selection by
\begin{equation}
\setlength\abovedisplayskip{0mm}
\setlength\belowdisplayskip{0mm}
    (\boldsymbol{x}^*, y^*) = \mathop{\arg\max}_{(\boldsymbol{x}_i, y_i) \in \mathscr{U}_{\tilde{a}}^{\tilde{c}}} \min_{(\boldsymbol{x}_j, y_j, a_j) \in \mathcal{S}} || \boldsymbol{h}_i - \boldsymbol{h}_j ||_2.
    \nonumber
\end{equation}
 
\end{algorithm}

\begin{remark}
\label{sec:rk1}
In each iteration of \Algnameabbr{}, the group selection reduces the value of $\sqrt{-L^2 \log \gamma (2 N_{\tilde{a}})^{-1}}$ by merging a new instance $(\boldsymbol{x}_i, y_i, a_i)|_{a_i=\tilde{a}}$ to the annotated dataset $\mathcal{S}$ to increase the value of $N_{\tilde{a}} \!=\! |\{ (\boldsymbol{x}_i, y_i, a_i) \!\in\! \mathcal{S} | a_i \!=\! \tilde{a} \}|$.
Here, we adopt an approximation given by Equation~(\ref{eq:group_approximation}) due to the negative relationship between the accuracy and the generalization loss,
\begin{equation}
\label{eq:group_approximation}
\begin{aligned}
    \tilde{a} &= \mathop{\arg\min}_{a \in \mathcal{A}} p_a^*(c, c) \approx \mathop{\arg\max}_{a \in \mathcal{A}} \int_{\mathcal{X}_a} \int_{\mathcal{Y}_c} p(\boldsymbol{x}, y) l(\boldsymbol{h}, y; \theta_h) \mathrm{d}\boldsymbol{x} \mathrm{d}y,
\end{aligned}
\end{equation}
where $\mathcal{Y}_c \!=\! \{ y \!=\! c ~|~ y \!\in\! \mathcal{Y} \}$ for $c \!\in\! \mathcal{Y}$. 
Meanwhile, the individual selection reduces the value of $\delta_{\tilde{a}}$ by selecting an instance following Equation~(\ref{eq:individual_selection}).
With the combination of group selection and individual selection, \Algnameabbr{}  contributes to the decline of $\min \{ \sqrt{-L^2 \log \gamma (2 N_{\tilde{a}})^{-1}}, (K_l + K_h L) \delta_{\tilde{a}} \}$, which leads to
tightening the upper bound of the fairness metric in Equation~(\ref{eq:generation_upper_bound2}). 
\end{remark}

Remark~\ref{sec:rk1} reveals that both group selection and individual selection of the two-step AIS are effective in tightening the upper bound of relaxed fairness metric.
Compared to AIS, we consider two compositional instance selection methods: one with group selection alone, where we randomly select an instance $(\boldsymbol{x}^*, y^*)$ from the subgroup
$\mathscr{U}_{\tilde{a}}^{\tilde{c}}$ satisfying $\tilde{a}, \tilde{c} \!=\! \mathop{\arg\min}_{a\in \mathcal{A}, c\in \mathcal{Y}} p_a^*(c, c)$;
and another with individual selection alone, where an instance is selected via
$(\boldsymbol{x}^*, y^*) \!=\! \mathop{\arg\max}_{(\boldsymbol{x}_i, y_i) \in \mathscr{U}} \min_{(\boldsymbol{x}_j, y_j, a_j) \in \mathcal{S}} || \boldsymbol{h}_i \!-\! \boldsymbol{h}_j ||_2$ without the selection of subgroup.
According to Remark~\ref{sec:rk1}, the compositional methods merely enable to reduce one of the terms $(2 N_{\tilde{a}})^{-1}$ or $\delta_{\tilde{a}}$ in Equation~(\ref{eq:generation_upper_bound2}), which are less effective than the two-step AIS as an unit.

\section{Experiment}

In this section, we conduct experiments to evaluate \Algnameabbr{}, aiming to answer the following research questions: 
\textbf{RQ1}: In terms of comparison with state-of-the-art baseline methods, does \Algnameabbr{} achieve more effective mitigation of unfairness under the same annotation budget?
\textbf{RQ2}: Does \Algnameabbr{} select more informative annotations for bias mitigation than baseline methods?
\textbf{RQ3}: How does the ratio of annotated instances affect the mitigation performance of \Algnameabbr{}?
\textbf{RQ4}: Do both group selection and individual selection in the AIS contribute to bias mitigation?
The experiment settings including the datasets and implementation details are given in Appendix~\ref{sec:dataset_appendix} and \ref{sec:implement_appendix}, respectively.

\subsection{Bias Mitigation Performance Analysis (RQ1)}

In this section, we compare our proposed \Algnameabbr{} with three state-of-the-art baseline methods of bias mitigation.
The key component of the baseline methods are given as follows.
\textbf{Vanilla}: The classifier is trained without bias mitigation. 
\noindent
\textbf{Group DRO}~\cite{sagawa2019distributionally}: Group DRO utilizes all sensitive information to minimize the classification loss on the unprivileged group to reduce the performance gap between different sensitive groups.
\noindent 
\textbf{Learning from Failure~(LfF)}~\cite{nam2020learning}: As a debiasing method that relies on proxy sensitive annotations, LfF adopts generalized cross-entropy loss to learn a proxy annotation generator, and proposes a re-weighted cross entropy loss to train the debiased model.
\noindent 
\textbf{Fair Active Learning~(FAL)}~\cite{anahideh2020fair}: The instance selection in FAL is to maintain a subset of annotated instances for model training, which is not guided by gradient-based model debiasing.
More details are given in the Appendix~\ref{sec:baseline_appendix}. 


\begin{figure*}[t]
\setlength{\abovecaptionskip}{1mm}
\setlength{\belowcaptionskip}{-2mm}
\centering
\begin{minipage}{0.32\linewidth}
\centering
\subfigure[MEPS.]{
\centering
\includegraphics[width=1.0\textwidth]{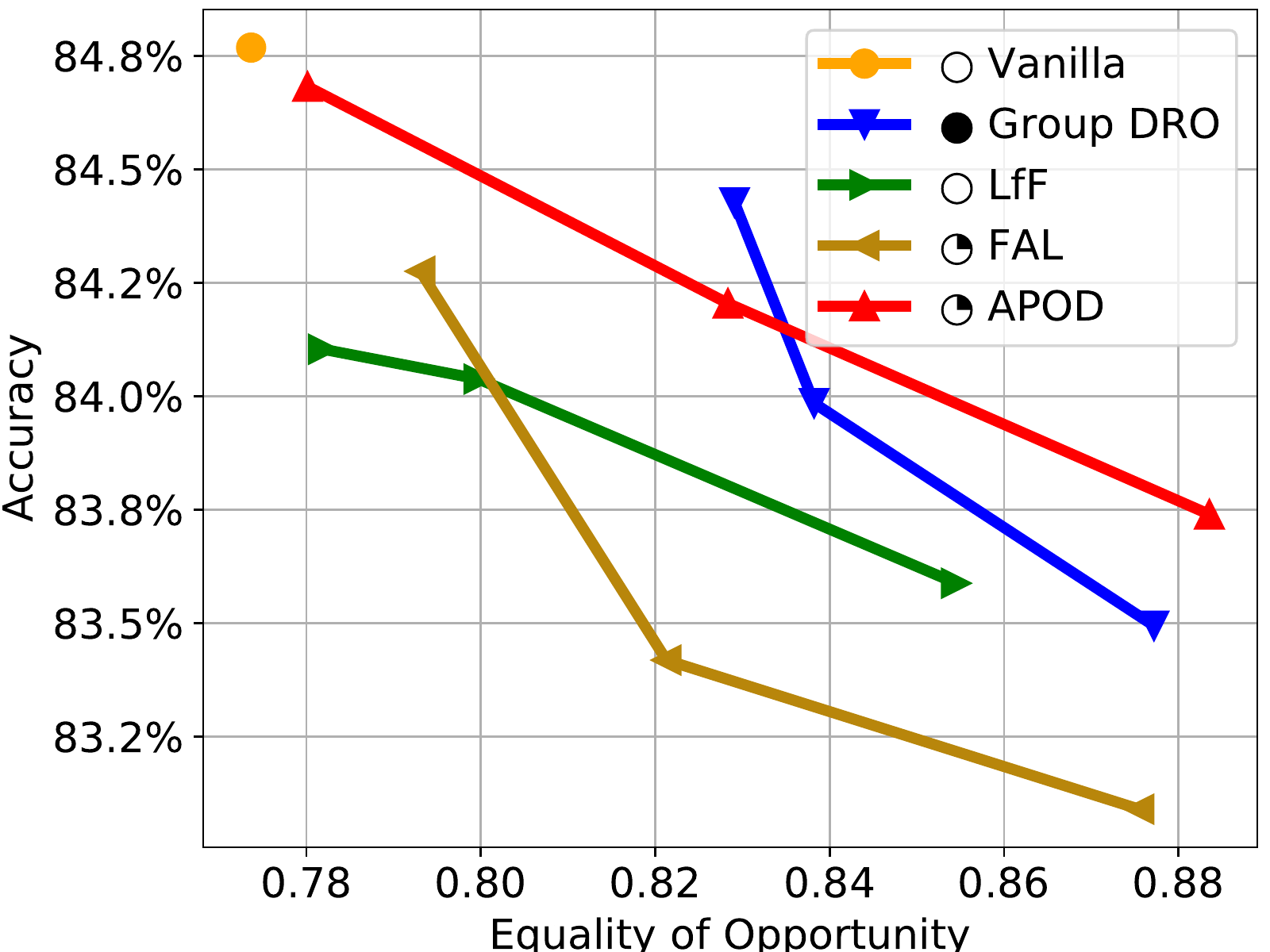}}
\end{minipage}
\begin{minipage}{0.32\linewidth}
\centering
\subfigure[German credit.]{
\centering
\includegraphics[width=1.0\textwidth]{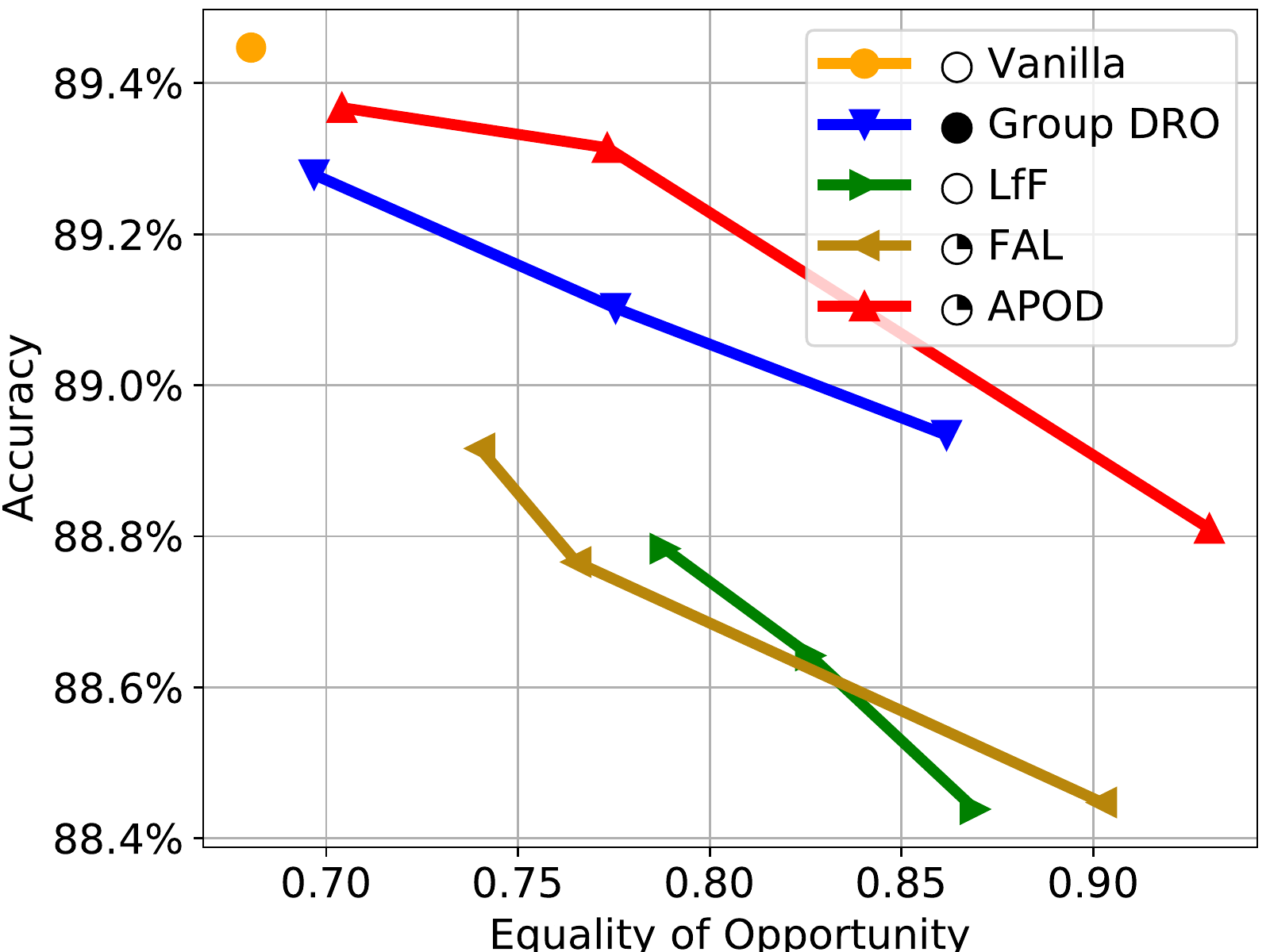}}
\end{minipage}
\begin{minipage}{0.32\linewidth}
\centering
\subfigure[Loan default.]{
\centering
\includegraphics[width=1.0\textwidth]{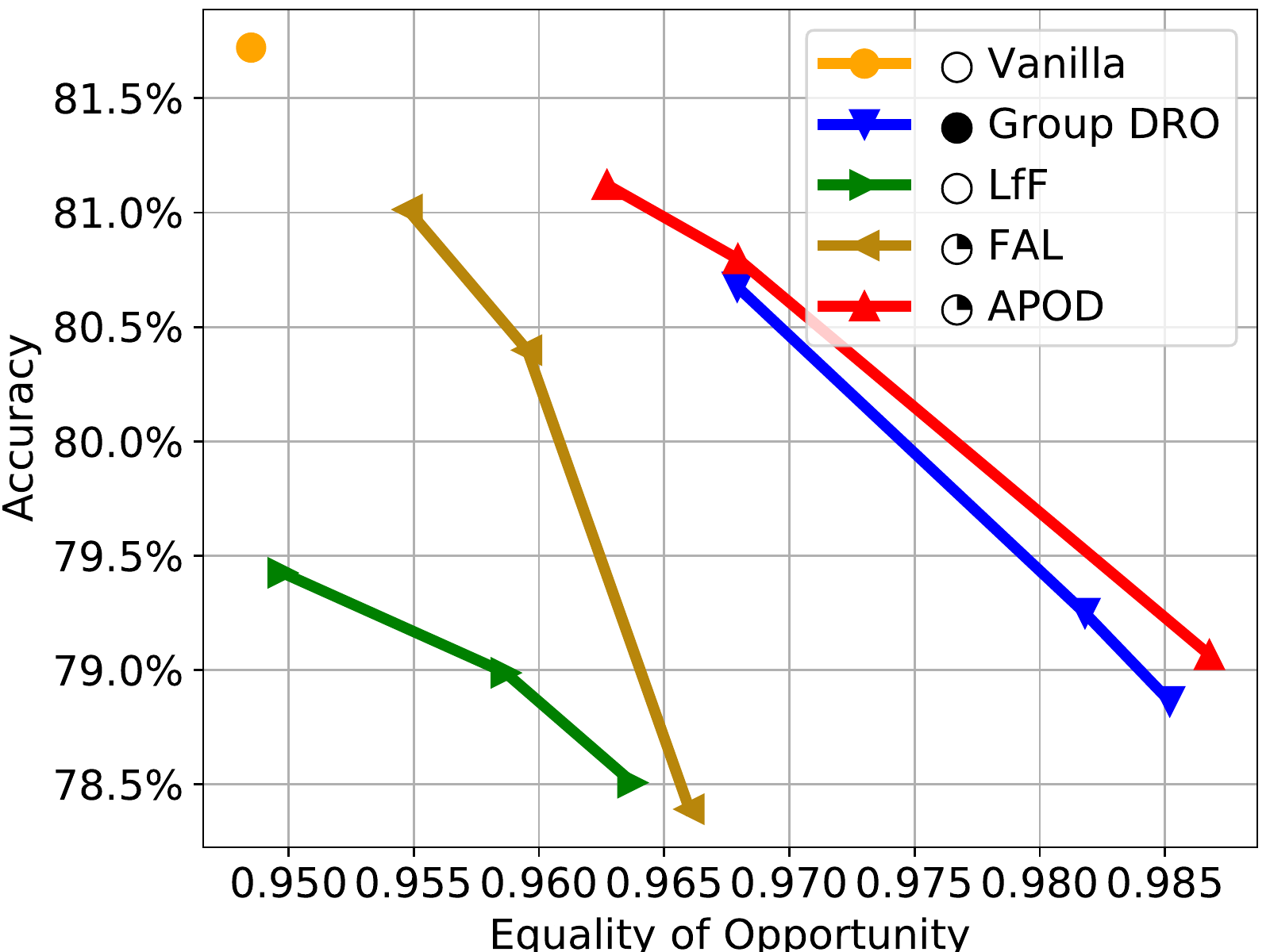}}
\end{minipage}
\begin{minipage}{0.32\linewidth}
\centering
\subfigure[Adult.]{
\centering
\includegraphics[width=1.0\textwidth]{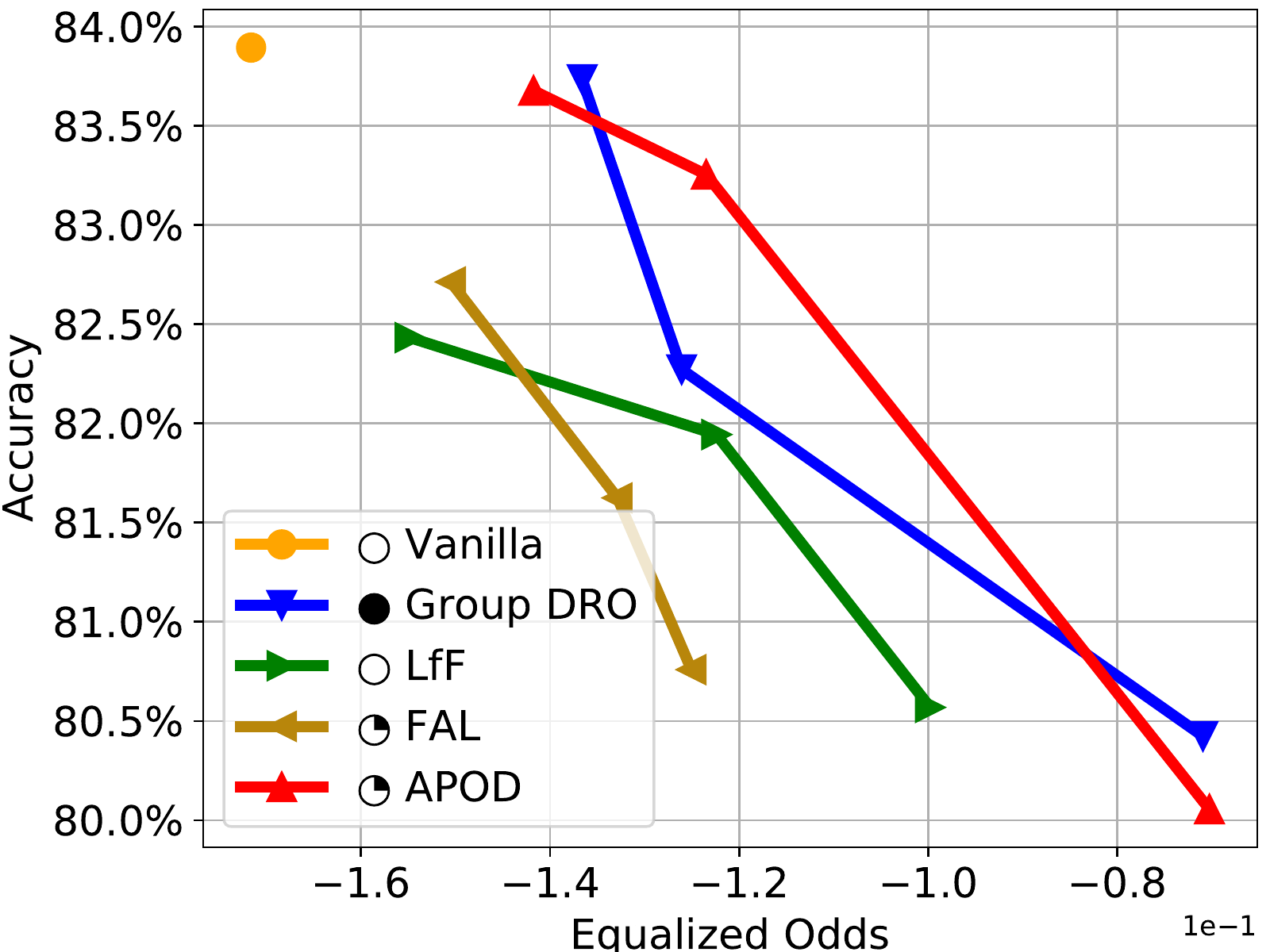}}
\end{minipage}
\begin{minipage}{0.32\linewidth}
\centering
\subfigure[CelebA-wavy hair.]{
\centering
\includegraphics[width=1.0\textwidth]{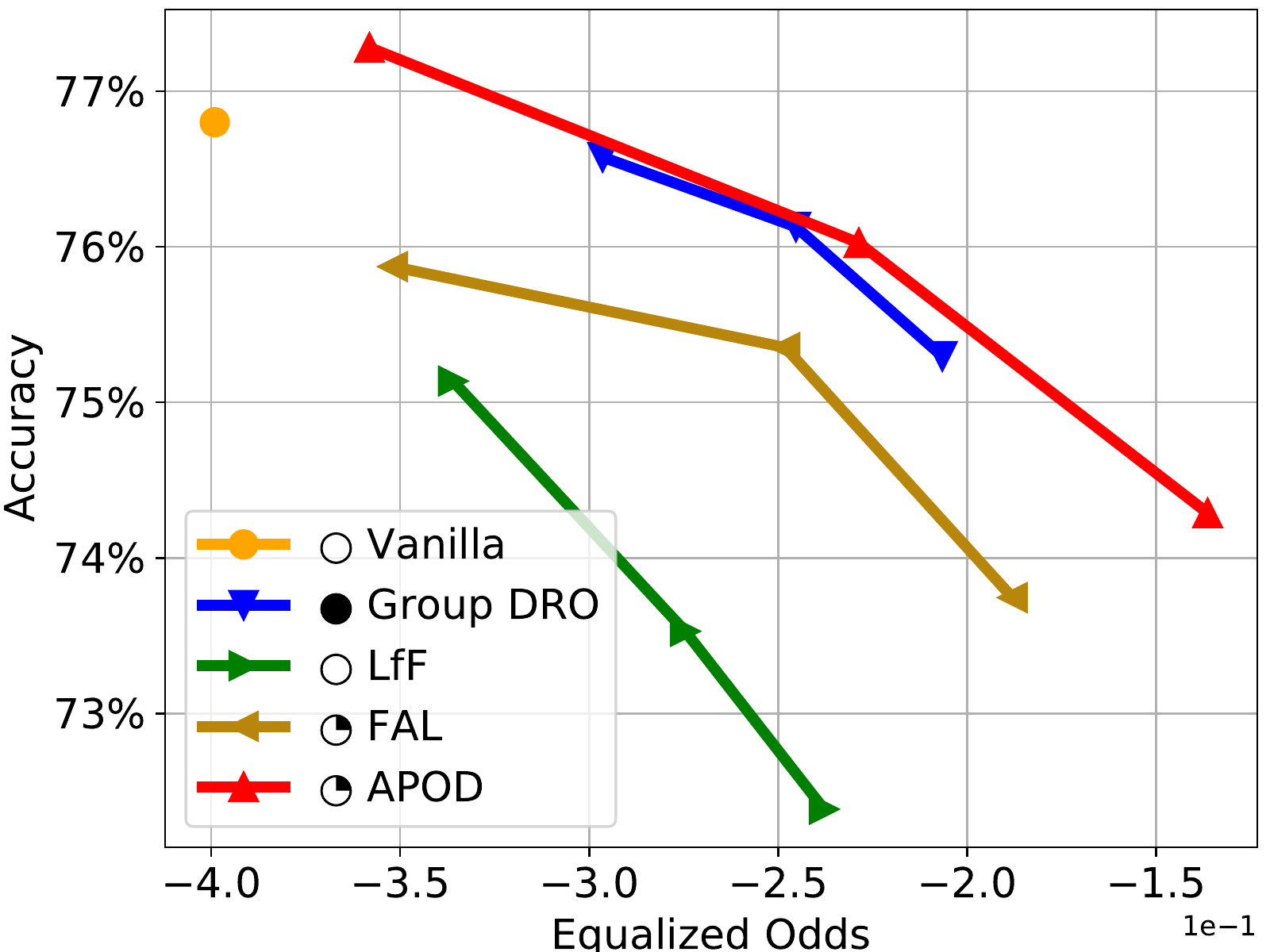}}
\end{minipage}
\begin{minipage}{0.32\linewidth}
\centering
\subfigure[CelebA-young.]{ 
\centering
\includegraphics[width=1.0\textwidth]{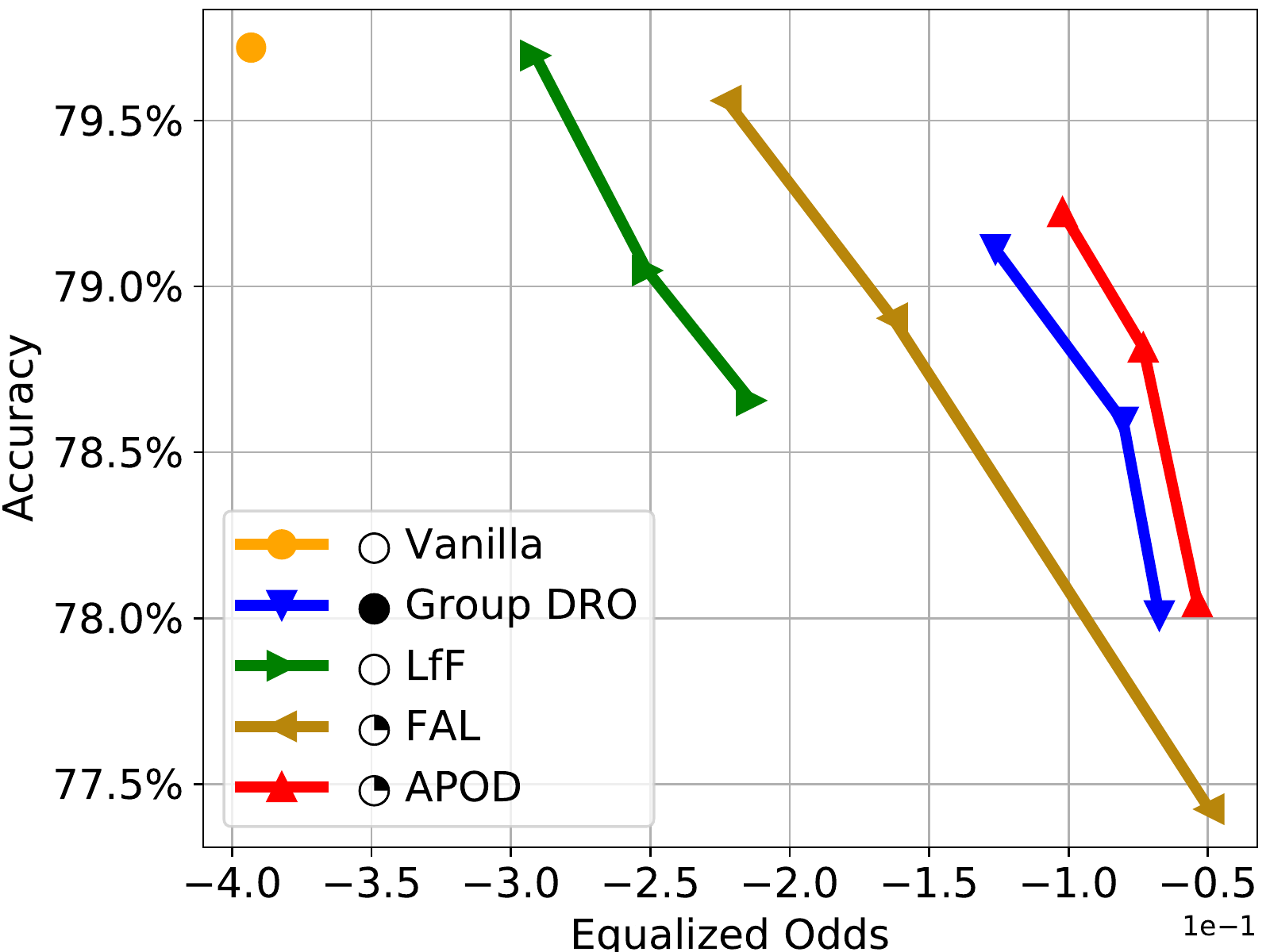}}
\end{minipage}
\caption{Accuracy-fairness curve; Algorithm: Vanilla training, Group DRO, LfF, FAL and \Algnameabbr{}; Dataset: 
(a) MEPS, 
(b) German credit, 
(c) Loan default,
(d) Adult,
(e) CelebA-wavy hair,
(f) CelebA-young.
}
\label{fig:exp_sota_results}
\end{figure*}

To have a fair comparison, we unify the splitting of datasets for all methods, and set the same annotation budget for \Algnameabbr{}{} and FAL. 
The mitigation performance is indicated by the fairness-accuracy curves~\cite{chuang2021fair}, where the hyperparameter $\lambda$ of \Algnameabbr{} varies in the range of $(0, 2]$, and the hyperparameter setting of baseline methods can be referred to Appendix~\ref{sec:hyper_param_appendix}.
We give the fairness-accuracy curves of each method on the five benchmark datasets in Figures~\ref{fig:exp_sota_results}~(a)-(f), respectively, where \includegraphics[scale=0.012]{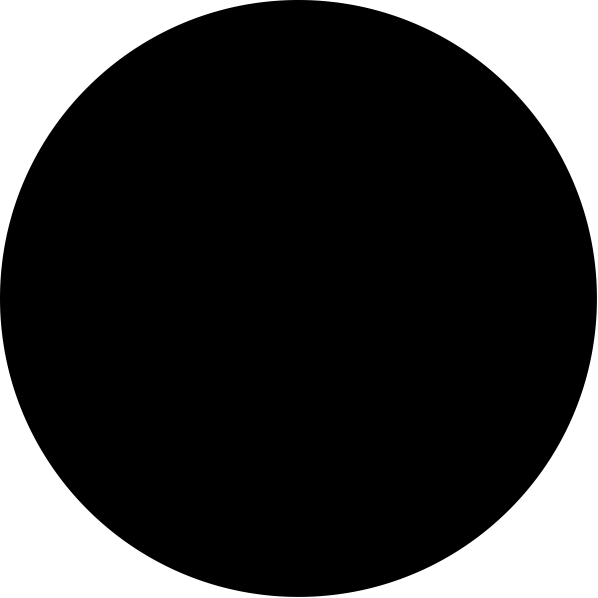}, \includegraphics[scale=0.012]{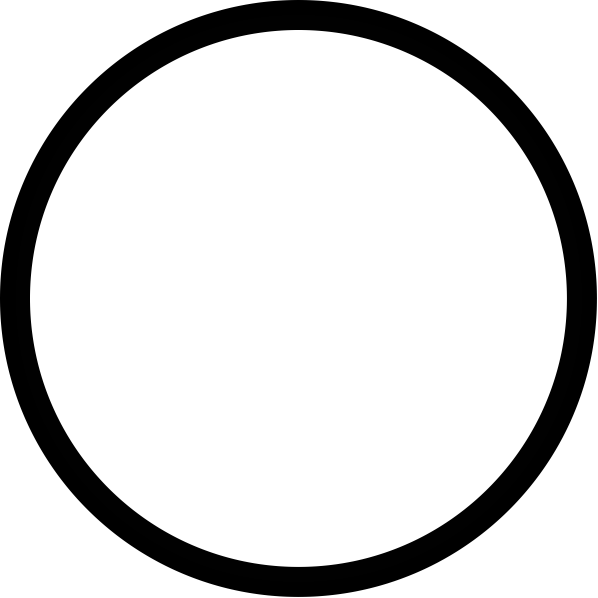} and \includegraphics[scale=0.012]{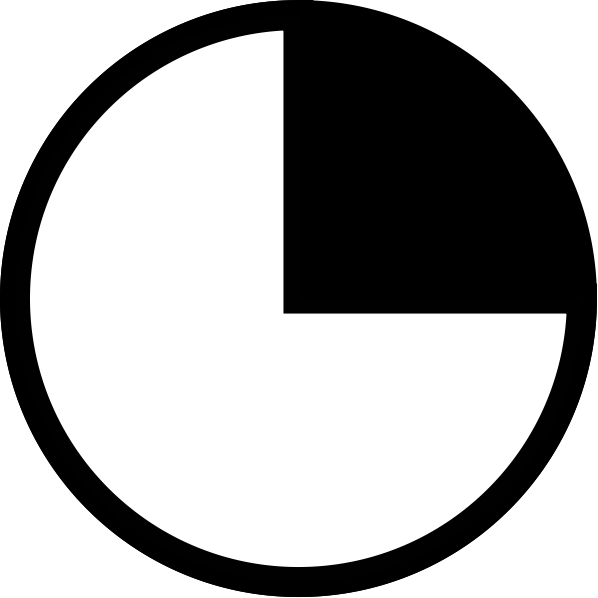} indicate the bias mitigation relies on entire-, zero- or partial- annotation of the training dataset, respectively.
Finally, we follow existing work~\cite{bechavod2017penalizing} to evaluate mitigation performance using the fairness metric $\text{EOP}$ on the MEPS, German credit and Loan default datasets, and using the fairness metric $\Delta \text{EO}$ on the remaining datasets~\cite{du2021fairness}.
We have the following observations:
\begin{itemize}[leftmargin=10pt, topsep=2mm]



\item[$\bullet$]  \Algnameabbr{} outperforms FAL on the five datasets under the same annotation budget in terms of the mitigation performance at the same level of accuracy. 
This demonstrates the superiority of \Algnameabbr{} applied to the scenarios with limited sensitive information.



\item[$\bullet$]  \Algnameabbr{} needs very few~(less than 3\% of the dataset) sensitive annotations, and shows comparable mitigation performance to Group DRO~(Group DRO requires a fully annotated dataset). 
This indicates the capacity of \Algnameabbr{} for bias mitigation under a limitation of sensitive annotations.


\item[$\bullet$]  \Algnameabbr{} outperforms LfF which relies on the proxy annotation of sensitive attributes.
It indicates that the limited human-annotated sensitive information in our framework is more beneficial than proxy annotations on the entire dataset to bias mitigation.


\end{itemize}

\begin{figure*}[t]
\setlength{\abovecaptionskip}{1mm}
\setlength{\belowcaptionskip}{-2mm}
\centering
\begin{minipage}{0.32\linewidth}
\centering
\subfigure[MEPS.]{
\centering
\includegraphics[width=0.93\textwidth]{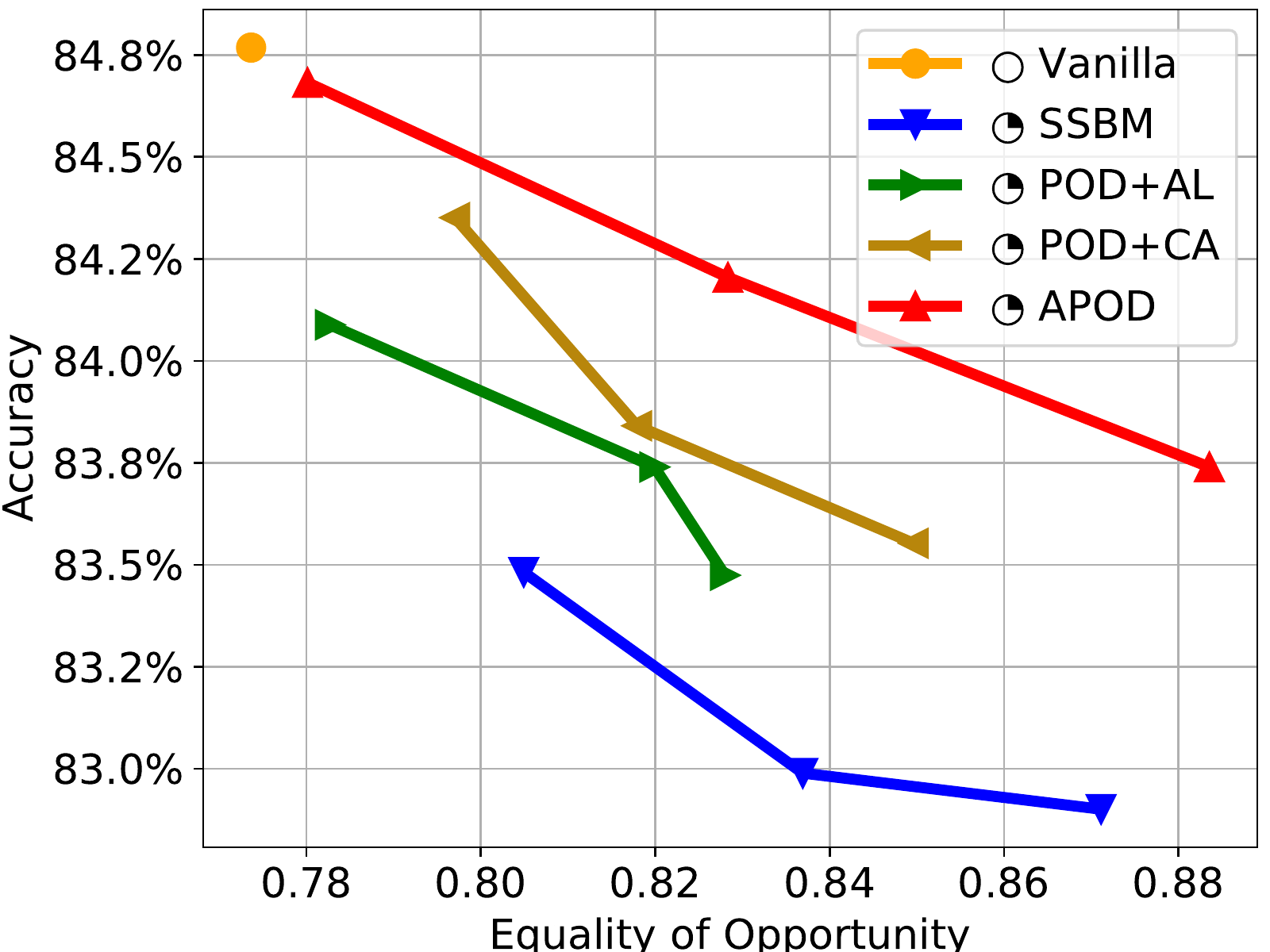}}
\end{minipage}
\begin{minipage}{0.32\linewidth}
\centering
\subfigure[German credit.]{
\centering
\includegraphics[width=0.93\textwidth]{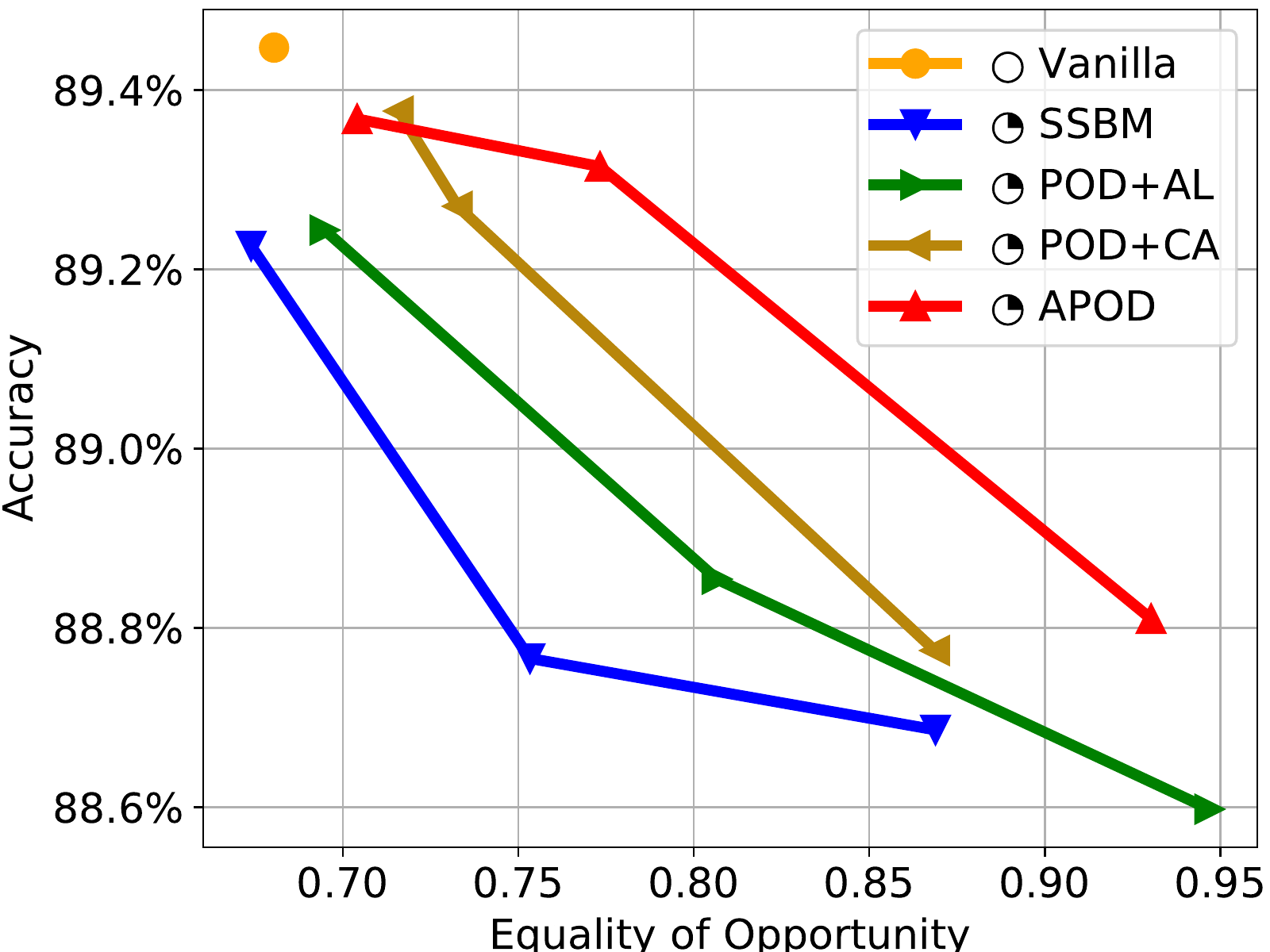}}
\end{minipage}
\begin{minipage}{0.32\linewidth}
\centering
\subfigure[Loan default.]{
\centering
\includegraphics[width=0.99\textwidth]{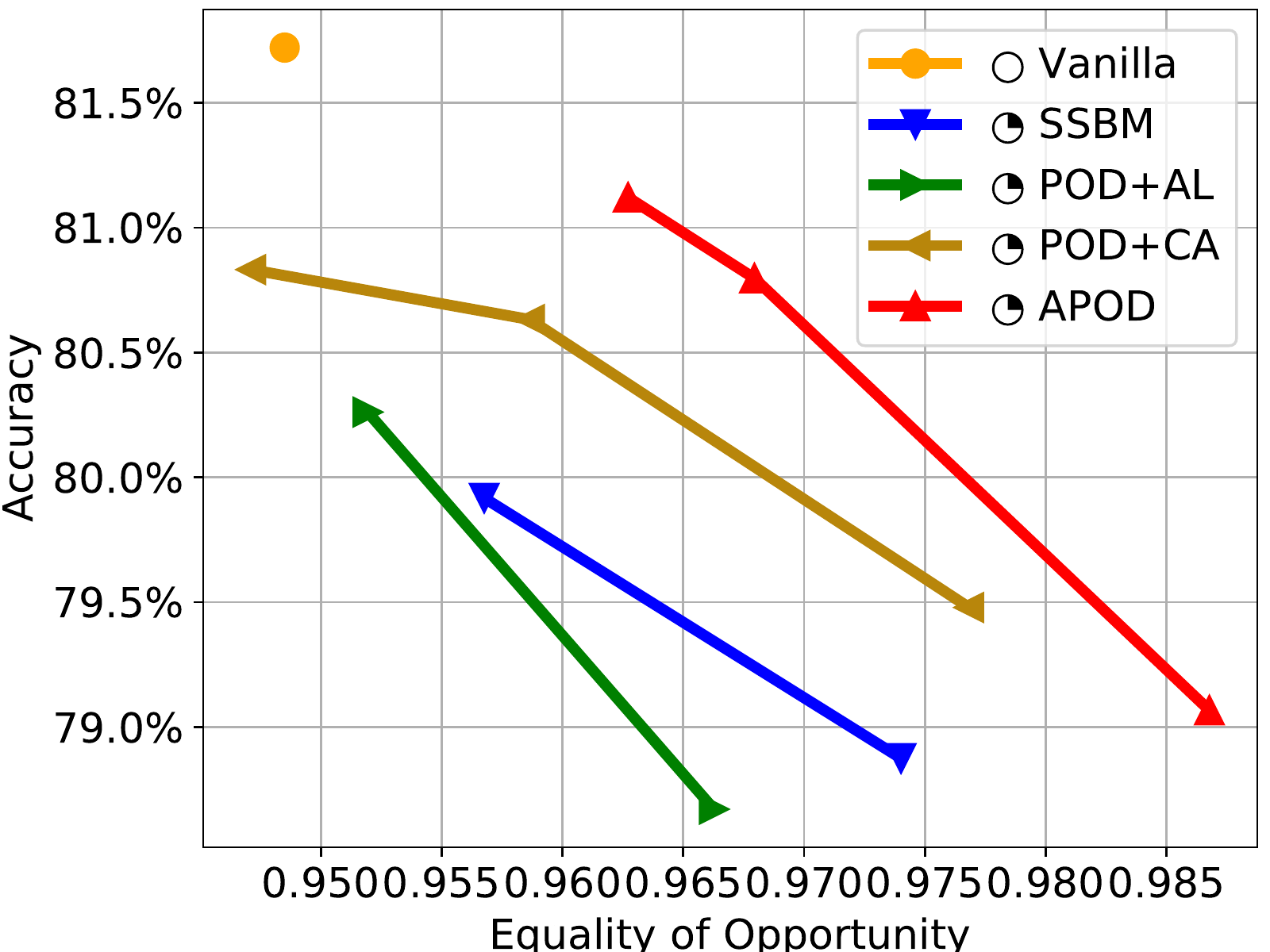}}
\end{minipage}
\begin{minipage}{0.32\linewidth}
\centering
\subfigure[Adult.]{
\centering
\includegraphics[width=0.99\textwidth]{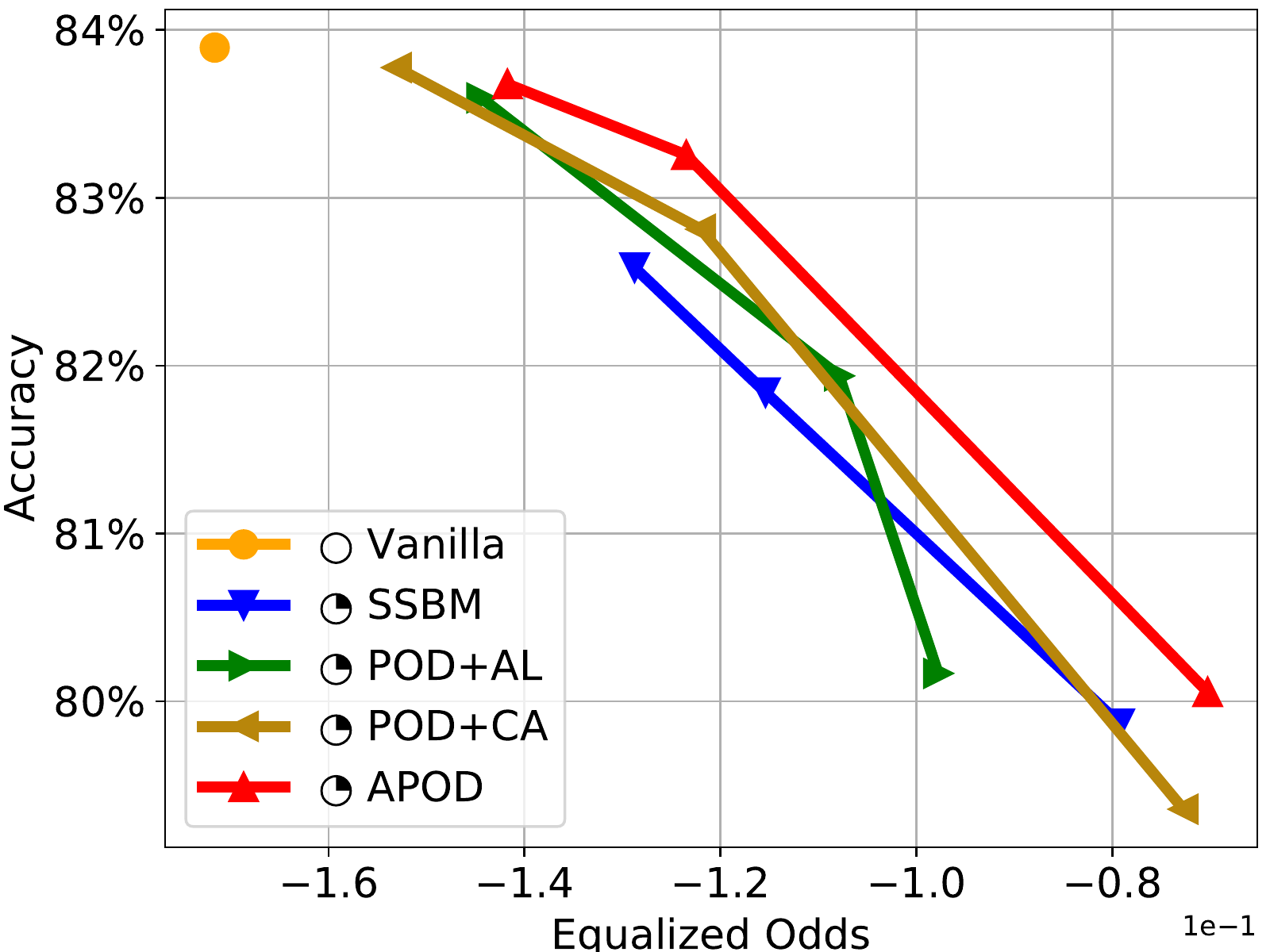}}
\end{minipage}
\begin{minipage}{0.32\linewidth}
\centering
\subfigure[CelebA-wavy hair.]{
\centering
\includegraphics[width=0.99\textwidth]{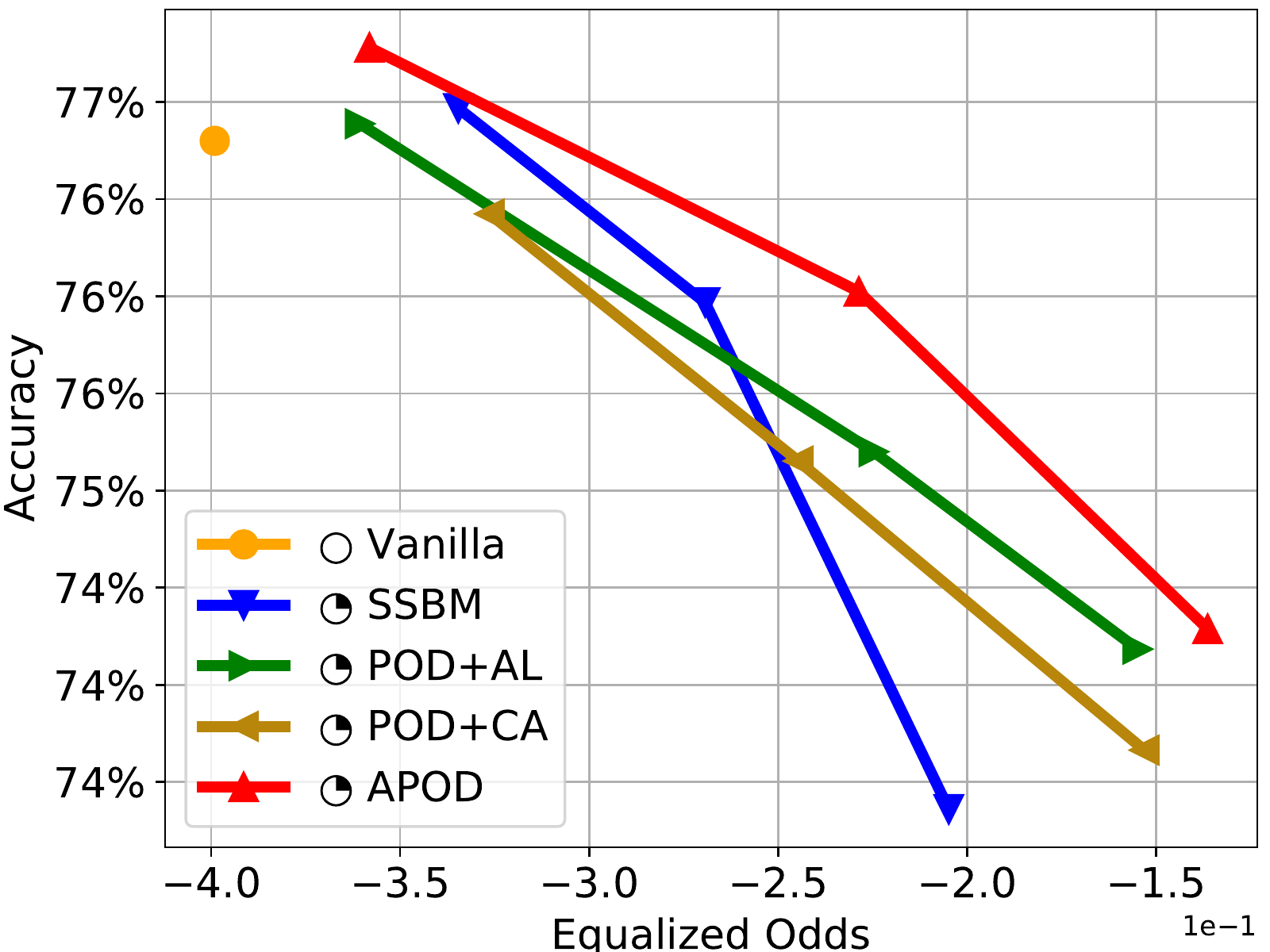}}
\end{minipage}
\begin{minipage}{0.32\linewidth}
\centering
\subfigure[CelebA-young.]{
\centering
\includegraphics[width=0.93\textwidth]{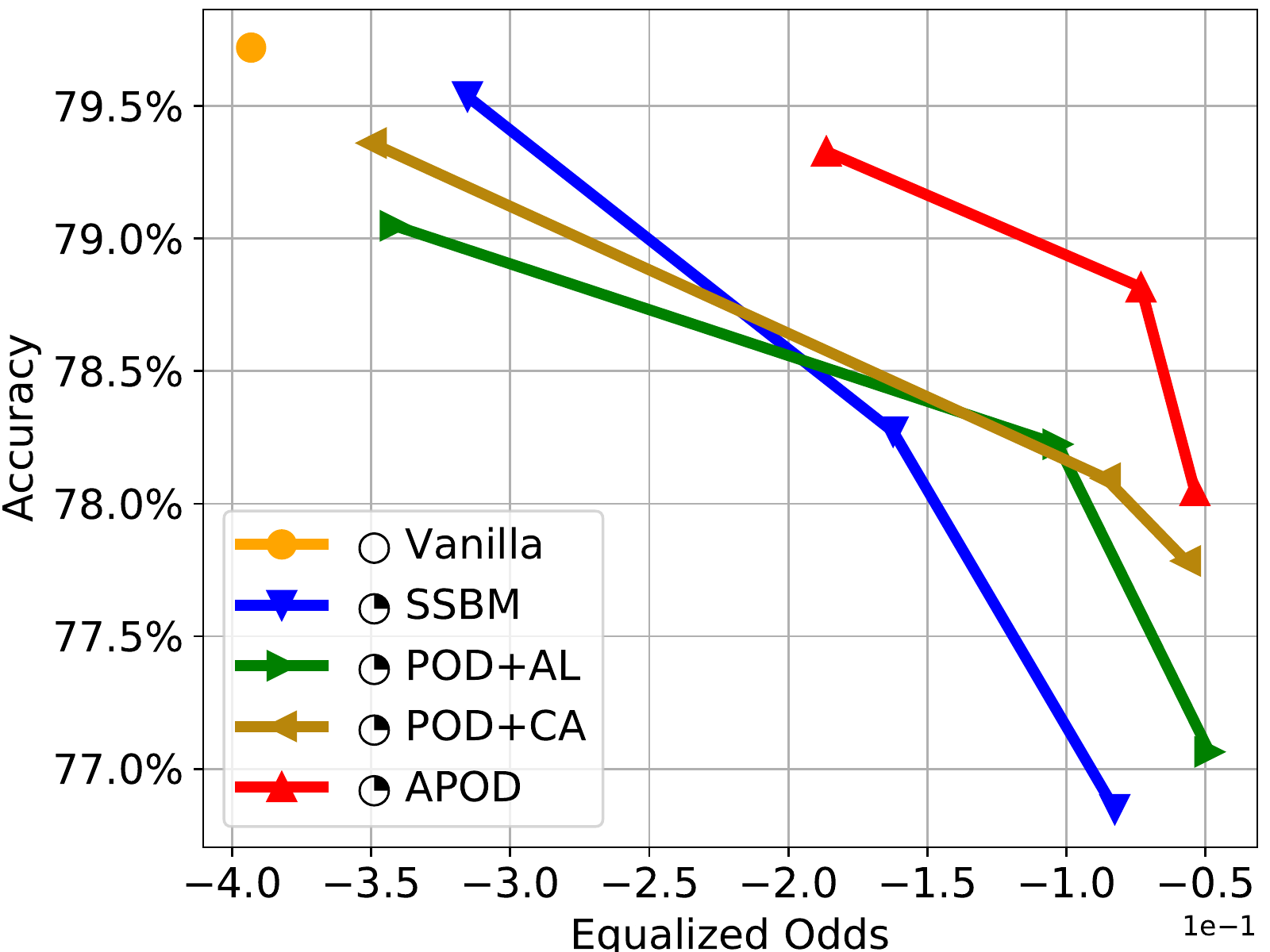}}
\end{minipage}
\caption{Accuracy-fairness curve; Algorithm: Vanilla training, SSBM, POD + AL, POD + CA and \Algnameabbr{}; Dataset: 
(a) MEPS, 
(b) German credit, 
(c) Loan default,
(d) Adult,
(e) CelebA-wavy hair,
(f) CelebA-young.
}
\label{fig:exp_al_results}
\end{figure*}

\subsection{Annotation Effectiveness Analysis~(RQ2)}

In this section, \Algnameabbr{} is compared with a semi-supervised method and two state-of-the-art active learning methods to demonstrate that AIS contributes to more informative sensitive annotations for the bias mitigation.
The key components of the baseline methods are given as follows.
\textbf{Vanilla}: The classifier is trained to minimize the cross-entropy loss without bias mitigation.
\noindent
\textbf{SSBM}: The semi-supervised bias mitigation initially samples a data subset for annotations via random selection, then adopts POD to debias the classifier on the partially annoatated dataset.
\noindent
\textbf{POD+Active learning with uncertainty sampling}~(POD+AL): 
The AIS in \Algnameabbr{} is replaced by active learning with uncertainty sampling, where an instance is selected to maximize the Shannon entropy of model prediction.
\noindent
\textbf{POD+Active learning with Core-set Approach}~(POD+CA):  
AIS is replaced by active learning with core-set approach, where an instance is selected to maximize the coverage of the entire unannotated dataset.
More details are given in the Appendix~\ref{sec:baseline_appendix}.


To unify the experiment condition, all methods have the same annotation budget and have $\lambda$ in the range of $(0,2]$.
The fairness-accuracy curves on the five datasets are given in Figures~\ref{fig:exp_al_results}~(a)-(f), respectively.
According to the mitigating results, we have the following observations:
\begin{itemize}[leftmargin=10pt, topsep=4mm]


\item[$\bullet$]  Compared to the semi-supervised method and the active learning-based methods, \Algnameabbr{} achieves better mitigation performance at the same level of accuracy, indicating the proposed AIS selects more informative annotations than those methods for bias mitigation.

\item[$\bullet$]  Different from POD+AL and POD+CA which sample the annotated instances from the whole dataset in each iteration, \Algnameabbr{} interactively selects more representative instances from different subgroups in different iterations, i.e. $\mathscr{U}_a^y$ for $a \in \mathcal{A}$ and $y \in \mathcal{Y}$, which contributes to more effective bias mitigation.



\item[$\bullet$]  SSBM shows almost the worst mitigation performance among all of the methods, because the initially randomly selected subset preserves the skewness of the original dataset, leading to non-optimal bias mitigation, which is consistent with our discussion in Section~\ref{sec:intro}.






\end{itemize}

\begin{figure*}[t]
\setlength{\abovecaptionskip}{0mm}
\setlength{\belowcaptionskip}{-3mm}
\centering
\begin{minipage}{0.32\linewidth}
\centering
\subfigure[Adult.]{
\centering
\includegraphics[width=1.0\textwidth]{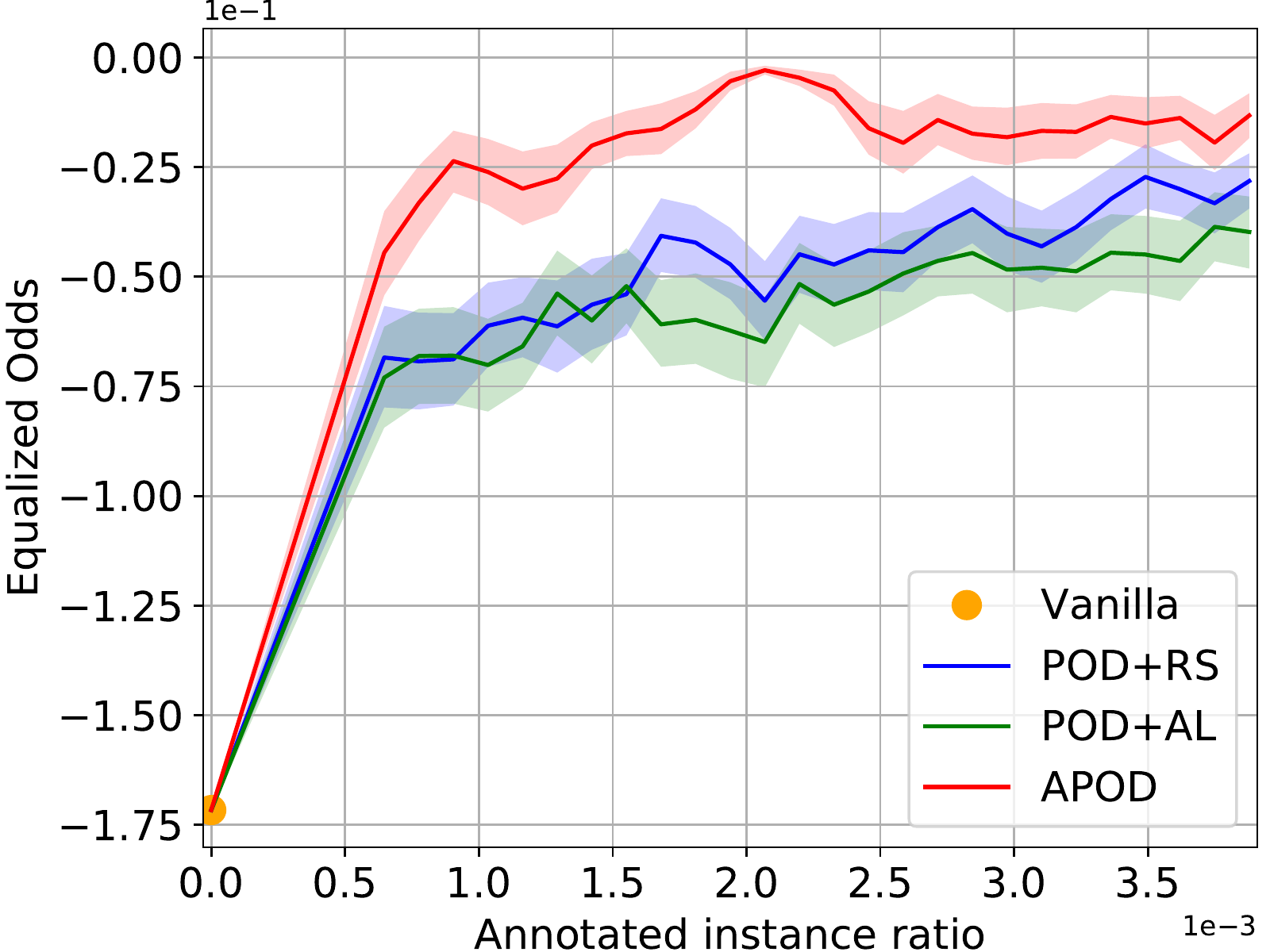}}
\end{minipage}
\begin{minipage}{0.32\linewidth}
\centering
\subfigure[Loan default.]{
\centering
\includegraphics[width=1.0\textwidth]{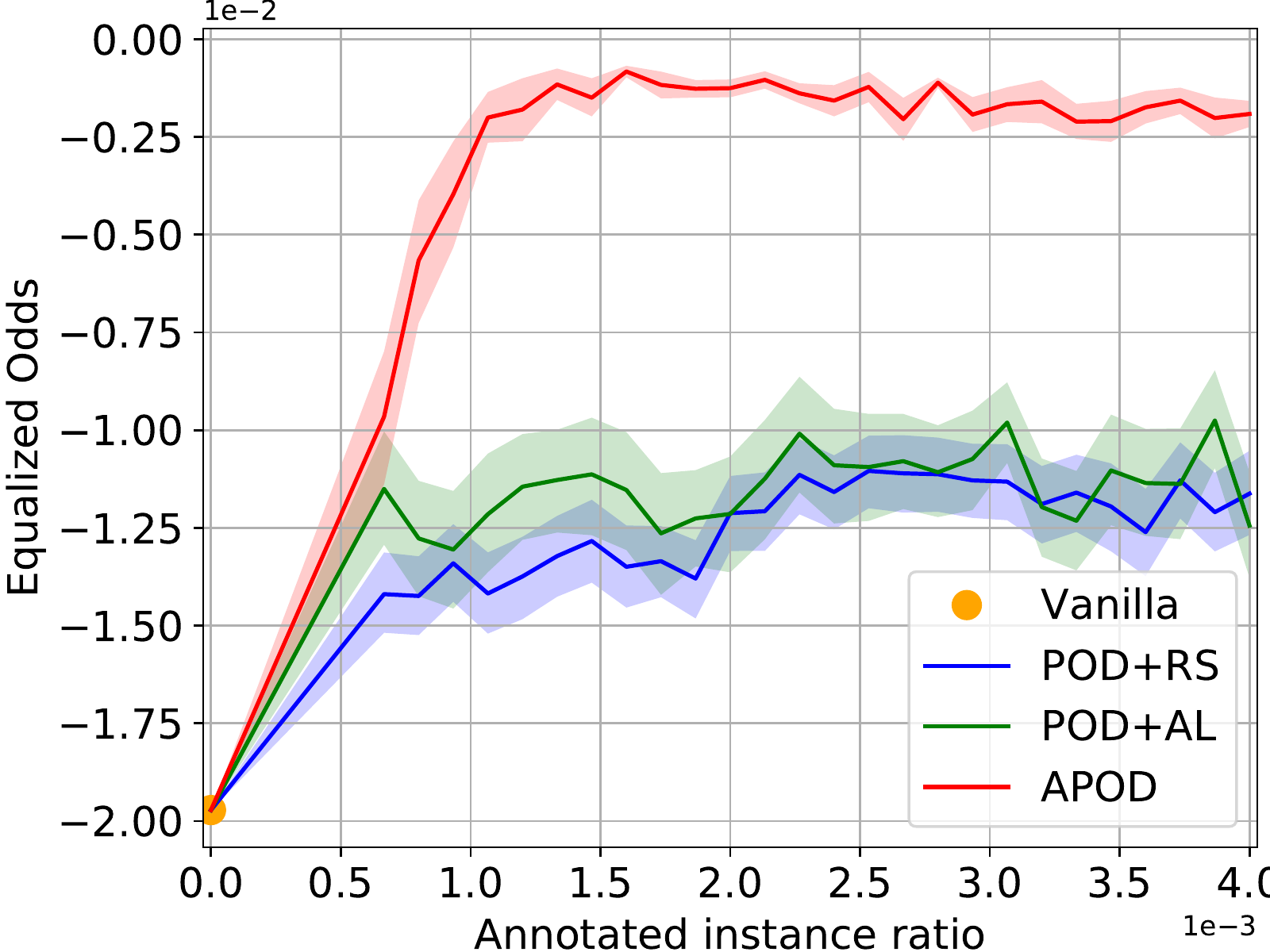}}
\end{minipage}
\begin{minipage}{0.32\linewidth}
\centering
\subfigure[MEPS. Ablation result.]{
\centering
\includegraphics[width=1.0\textwidth]{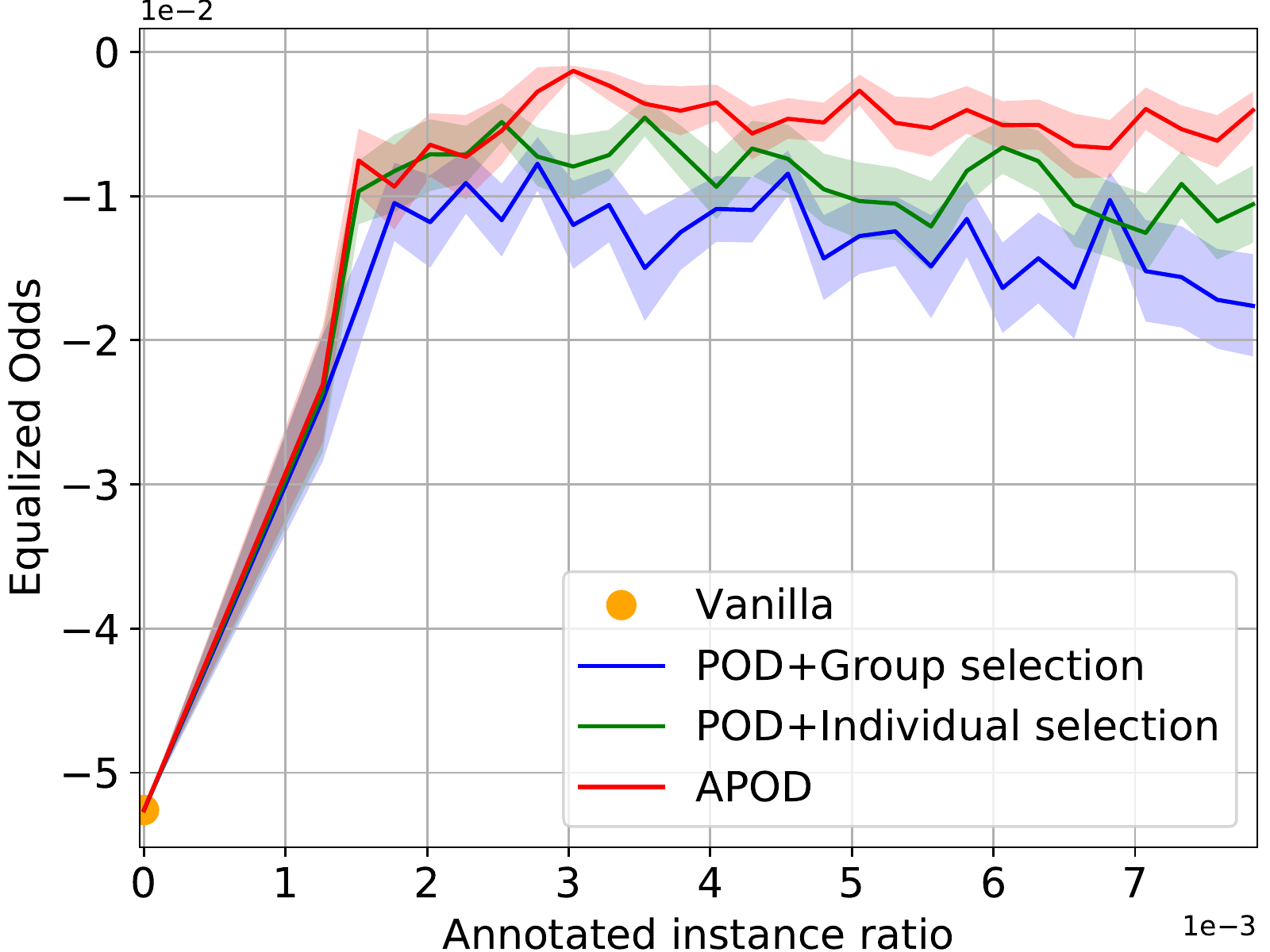}}
\end{minipage}
\caption{Effect of the annotation ratio to \Algnameabbr{}, POD+RS and POD+AL on (a) Adult and (b) Loan default dataset. 
(c) Mitigation performance of \Algnameabbr{}, POD+Group selection and POD+Individual selection.
}
\label{fig:exp_results2}
\end{figure*}




\vspace{-5mm}
\subsection{Annotation Ratio Analysis (RQ3)}

We now evaluate the effect of the the annotation ratio (that is, the ratio of the annotated instances to the training instances) on bias mitigation.
Specifically, we tune $\lambda$ in the range of $(0, 2]$, and find that $\lambda = 0.5$ and $0.1$ can provide a good accuracy-fairness trade-off on the Adult and Loan default datasets, respectively.
In addition to the existing baseline methods, we also consider replacing AIS in \Algnameabbr{} into random selection~(\textbf{POD+RS}) for comparison.
Since one instance is selected for annotation in each iteration of \Algnameabbr{}, the Equalized Odds of the snapshot model in each iteration is estimated and plotted versus the annotation ratio on the Adult and Loan default datasets in Figures~\ref{fig:exp_results2}~(a) and (b), respectively.
We also give the error bar to show the standard deviation of each method.
Overall, we have the following observations:
\begin{itemize}[leftmargin=10pt, topsep=5pt]

    \item[$\bullet$]  All methods achieve better mitigation performance as the annotation ratio increases due to the distribution of the annotated set becoming consistent with the entire dataset. 

    \item[$\bullet$]  \Algnameabbr{} shows better mitigation performance than POD+AL and POD+RS at the same level of annotation ratios. 
    This indicates the selection of annotated instances by AIS significantly leads to a reduction of bias. 
    In contrast, the bias mitigation of POD+RS merely derives from the increasing annotations.
    
    

    \item[$\bullet$]  \Algnameabbr{} shows significantly higher improvement in bias mitigation than the baseline methods even when the annotation ratio is small, and enables the mitigation to converge to a higher level at smaller annotation ratios~(i.e., earlier) than baseline methods.

\end{itemize}



\subsection{Ablation Study (RQ4)}

To demonstrate the effectiveness of group selection and individual selection, \Algnameabbr{} is compared with two compositional methods: POD+Group selection and POD+Individual selection.
The three methods are tested with the same hyperparameter setting on the MEPS dataset.
The value of the fairness metric is given in Figure~\ref{fig:exp_results2}~(c).
It is observed that both POD+Group selection and POD+Individual selection show considerable degradation in mitigation performance compared to \Algnameabbr{}. It empirically validates Remark~\ref{sec:rk1} that both group selection and individual selection in AIS contribute to tightening the upper bound of the relaxed fairness metrics, thus contributing to bias mitigation.



    
    
    


\subsection{Visualization of Annotated Instances}

\label{sec:exp_visualization}
We visualize the tSNE embeddings of the annotated instances to trace the active instance selection of \Algnameabbr{}.
The tSNE visualization is given in Figures~\ref{fig:annotated_instances}~(a)-(d) in Appendix~\ref{sec:visualization_appendix}.
Specifically, Figures~\ref{fig:annotated_instances}~(a)-(d) illustrate the the tSNE embeddings of the annotated instances selected by \Algnameabbr{} and random selection on the MEPS and Adult datasets, respectively.
We use different colors to indicate different groups, where positive instances (Y=1) are less than negative ones (Y=0), and the unprivileged group (A=0) is smaller than the privileged group (A=1).
Overall, we have the following observations:

\begin{itemize}[leftmargin=10pt, topsep=5pt]

    \item[$\bullet$]  The annotated instances of Random selection in Figures~\ref{fig:annotated_instances}~(a) and (c) are consistent with Figure~\ref{fig:unfair_vs_rs_vs_APD}~(b), which follows the skewed distribution of original dataset, and leads to non-optimal mitigation of bias. 
    
    \item[$\bullet$]  The annotated instances of \Algnameabbr{} in Figures~\ref{fig:annotated_instances}~(b) and~(d) are consistent with the optimal annotating in Figure~\ref{fig:unfair_vs_rs_vs_APD}~(c), where the annotated subset shows less skewness compared to the original distribution. 
    
    \item[$\bullet$]  \Algnameabbr{} selects more annotated instances from the unprivileged group $\{(x_i, \! y_i, \! a_i) \\ | y_i \!=\! 1, a_i \!=\! 0\}$ than random selection.
    In such a manner, \Algnameabbr{} improves the contribution of unprivileged group to the average loss, which contributes to the bias mitigation.
    
\end{itemize}

\section{Conclusion}

In this paper, we propose \Algnameabbr{}, an iterative framework for active bias mitigation under the limitation of sensitive annotations.
In each iteration, \Algnameabbr{} guides the active instance selection to discover the optimal instance for annotation, and maximally promotes bias mitigation based on the partially annotated dataset through penalization of discrimination. 
Theoretical analysis indicates that \Algnameabbr{} contributes to effective bias mitigation via bounding the relaxed fairness metrics.
Experiment results further demonstrate the effectiveness of \Algnameabbr{} on five benchmark datasets, where it outperforms baseline methods under the same annotation budget and has a desirable outcome of bias mitigation even when most of the sensitive annotations are unavailable.
This also indicates its benefit to real-world applications, especially when the disclosed or available sensitive information is very limited.



\bibliographystyle{splncs04}
\bibliography{apod_reference}

\newpage
\appendix
\section*{Appendix}

\setcounter{theorem}{0}

\section{Visualization of Annotated Instances}
\label{sec:visualization_appendix}

The tSNE visualization is given in Figures~\ref{fig:annotated_instances}~(a)-(d).
We have several observations on this results which can be referred to Section~\ref{sec:exp_visualization}.

\begin{figure*}
\setlength{\abovecaptionskip}{2mm}
\setlength{\belowcaptionskip}{0mm}
\centering
\begin{minipage}{0.24\linewidth}
\centering
\subfigure[\scriptsize Random selection.]{
\centering
\includegraphics[width=1.0\textwidth]{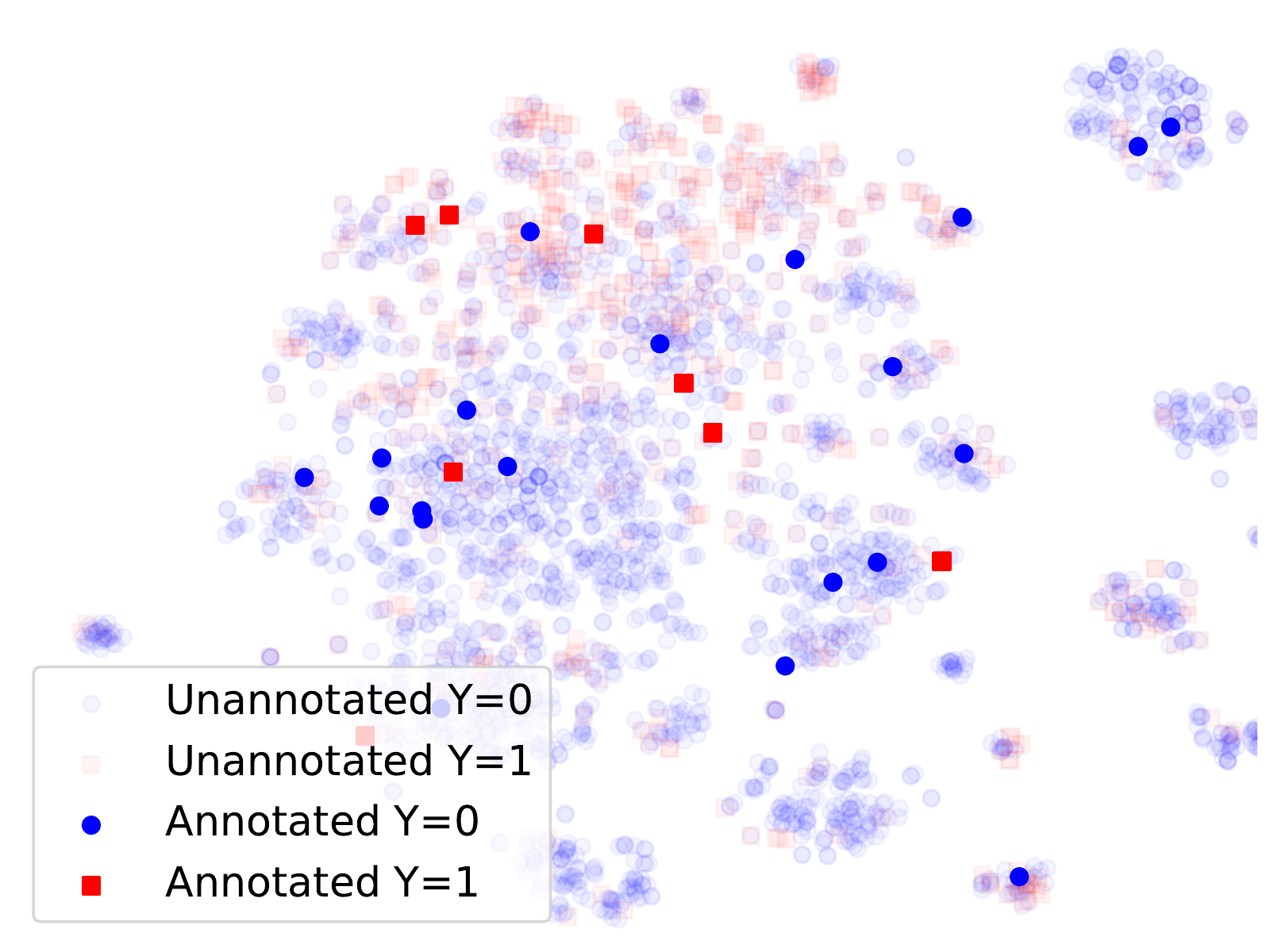}}
\end{minipage}
\begin{minipage}{0.24\linewidth}
\centering
\subfigure[\scriptsize \Algnameabbr{}.]{
\centering
\includegraphics[width=1.0\textwidth]{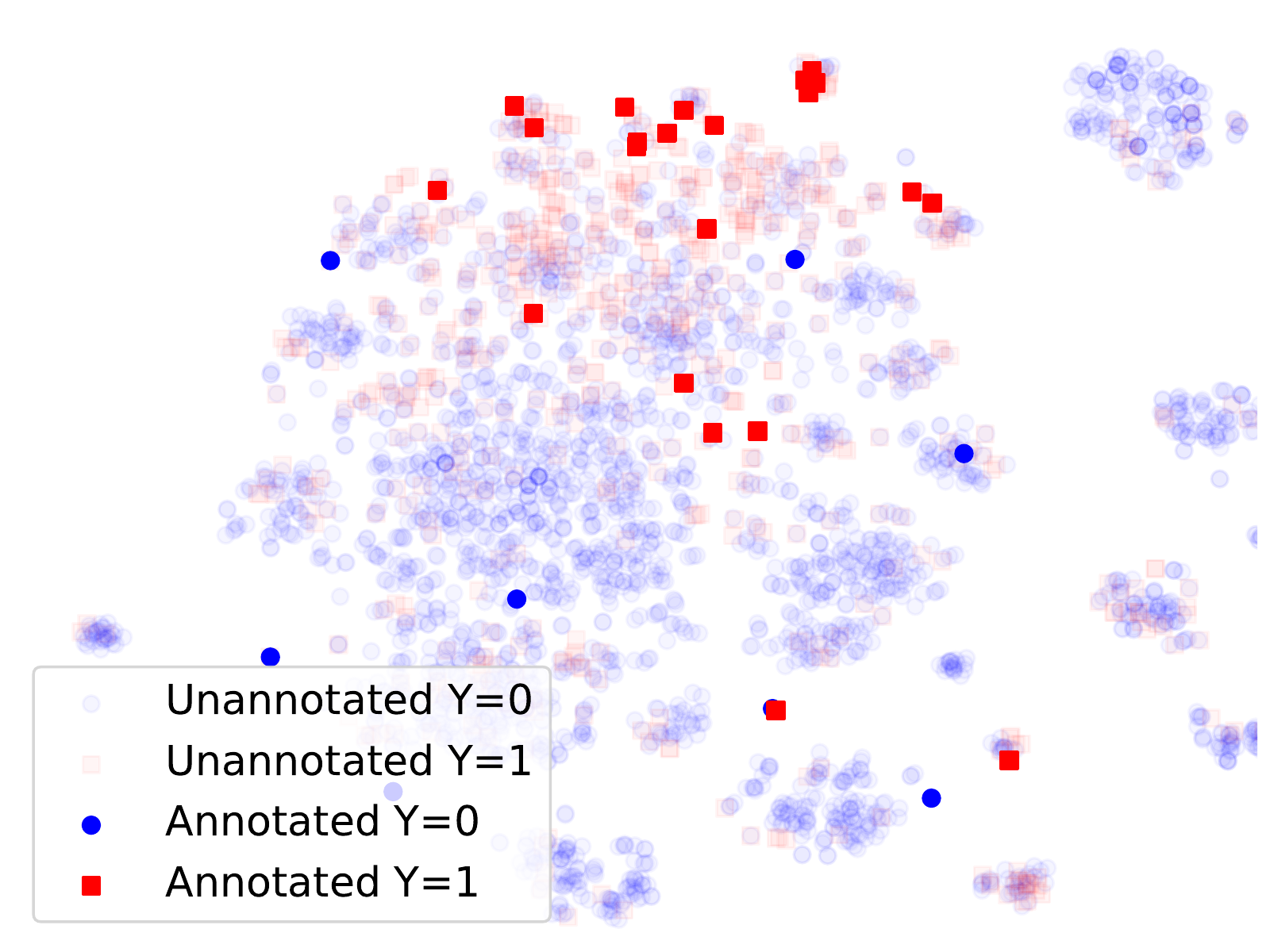}}
\end{minipage}
\begin{minipage}{0.24\linewidth}
\centering
\subfigure[\scriptsize Random selection.]{
\centering
\includegraphics[width=1.0\textwidth]{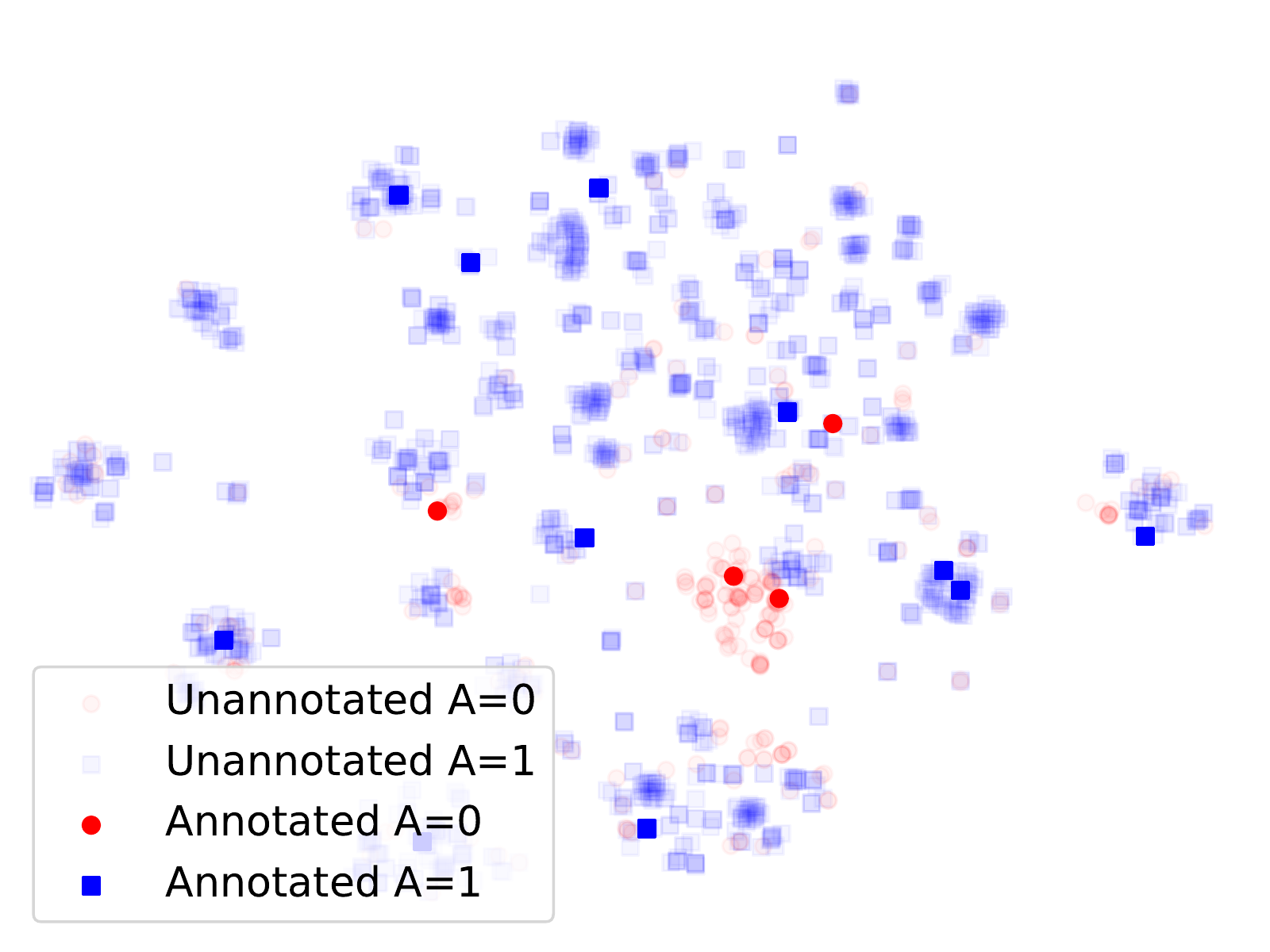}}
\end{minipage}
\begin{minipage}{0.24\linewidth}
\centering
\subfigure[\scriptsize \Algnameabbr{}.]{
\centering
\includegraphics[width=1.0\textwidth]{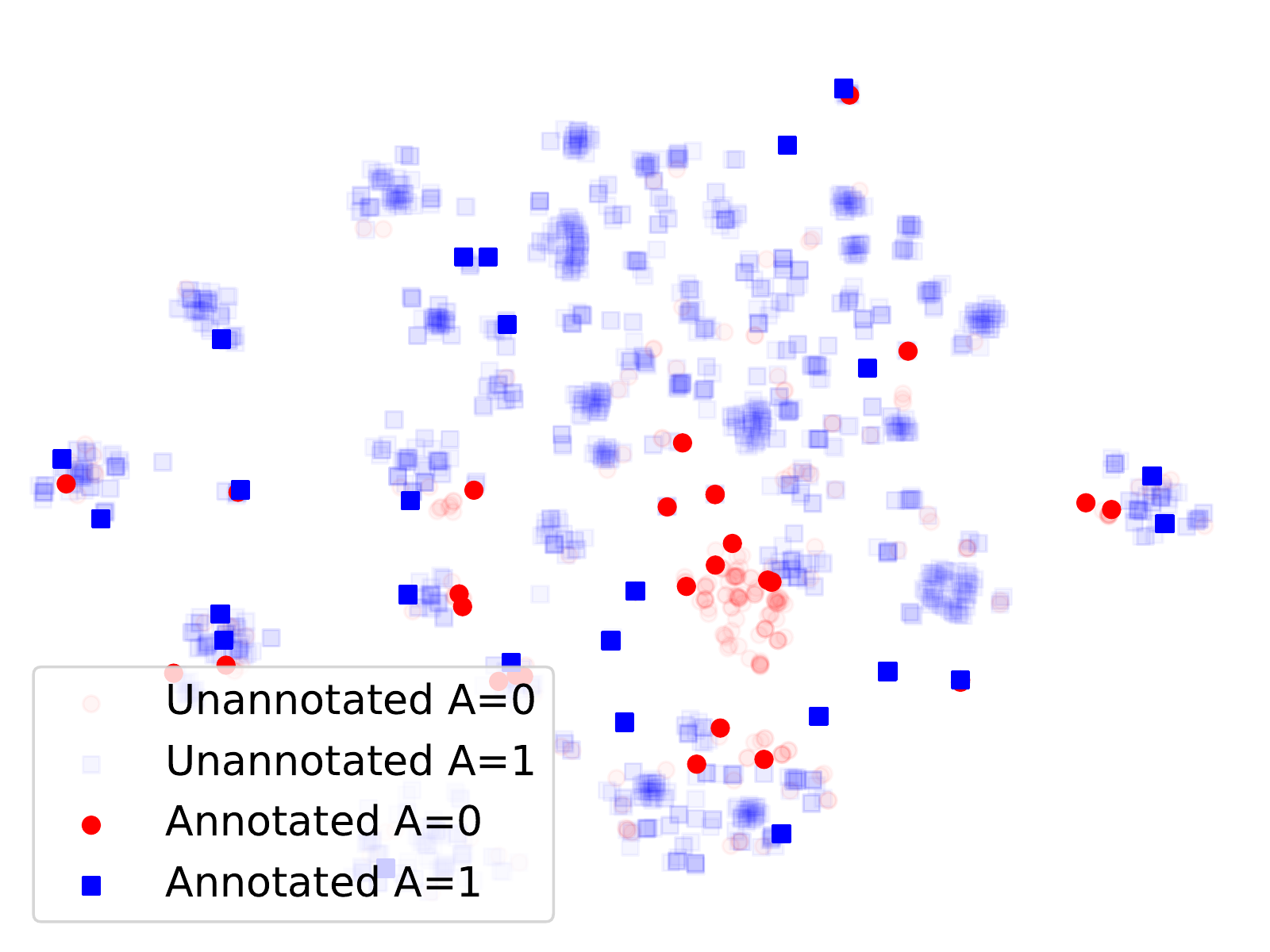}}
\end{minipage}
\caption{Comparison of \Algnameabbr{} and Random selection in terms of the annotated instances from different groups.
(a) Annotated instances by Random selection.
(b) Annotated instances by \Algnameabbr{}.
(c) Annotated positive instances (Y=1) by Random selection.
(d) Annotated positive instances (Y=1) by \Algnameabbr{}.
}
\label{fig:annotated_instances}
\end{figure*}

\section{Proof of Theorem~\ref{pp:upper_bound}}
\label{sec:proof_theorem_appendix}

In this section, we first propose Corollary~\ref{prop:prob_ineq}, then adopt the corollary to prove Theorem~\ref{pp:upper_bound}.

\begin{corollary}
\label{prop:prob_ineq}
For $p(y), f(y), g(y) > 0$, we have
\begin{equation}
\begin{aligned}
\int_{\mathcal{Y}} p(y) f(y) \mathrm{d} y &\leq \int_{\mathcal{Y}} q(y) f(y) \mathrm{d} y + \int_{\mathcal{Y}} |p(y) - q(y)| f(y) \mathrm{d} y
\\
\int_{\mathcal{Y}} p(y) f(y) \mathrm{d} y &\leq \int_{\mathcal{Y}} p(y) g(y) \mathrm{d} y + \int_{\mathcal{Y}} p(y) |f(y) - g(y)|  \mathrm{d} y
\nonumber
\end{aligned}
\end{equation}
\end{corollary}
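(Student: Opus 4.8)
The plan is to prove each of the two inequalities by an additive--subtractive decomposition inside the integral, followed by the elementary pointwise bound $x \leq |x|$. Since the two statements are structurally identical after swapping the roles of the density and the integrand, I would treat them in parallel and verify that the stated positivity hypotheses are exactly what is needed in each case.

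For the first inequality, I would begin from the trivial pointwise identity $p(y) f(y) = q(y) f(y) + \big( p(y) - q(y) \big) f(y)$, valid for every $y \in \mathcal{Y}$. Integrating over $\mathcal{Y}$ and using linearity of the integral gives $\int_{\mathcal{Y}} p(y) f(y)\, \mathrm{d}y = \int_{\mathcal{Y}} q(y) f(y)\, \mathrm{d}y + \int_{\mathcal{Y}} \big( p(y) - q(y) \big) f(y)\, \mathrm{d}y$. The key step is then to control the residual term: because $f(y) > 0$, the scalar inequality $p(y) - q(y) \leq |p(y) - q(y)|$ is preserved under multiplication by $f(y)$, so $\big( p(y) - q(y) \big) f(y) \leq |p(y) - q(y)|\, f(y)$ holds pointwise, and monotonicity of the integral delivers the claimed bound.

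For the second inequality, I would use the symmetric decomposition $p(y) f(y) = p(y) g(y) + p(y) \big( f(y) - g(y) \big)$, integrate, and invoke $p(y) > 0$ together with $f(y) - g(y) \leq |f(y) - g(y)|$ to conclude $\int_{\mathcal{Y}} p(y) \big( f(y) - g(y) \big)\, \mathrm{d}y \leq \int_{\mathcal{Y}} p(y) |f(y) - g(y)|\, \mathrm{d}y$, which rearranges into the stated form.

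Since every step follows directly from linearity and monotonicity of the integral, there is no genuine obstacle; the only point requiring care is that the positivity assumption on the multiplying factor ($f > 0$ in the first case, $p > 0$ in the second) is precisely what licenses absorbing the sign into an absolute value without reversing the inequality. This corollary is meant as a reusable triangle-type estimate, and its real utility is deferred to the proof of Theorem~\ref{pp:upper_bound}, where $p$ and $q$ will be instantiated as the two group-conditional densities and $f, g$ as the loss evaluated at nearby embeddings, so that the two residual terms become exactly the statistical and Lipschitz contributions appearing in the final bound.
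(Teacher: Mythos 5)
Your proof is correct and matches the paper's argument exactly: the same add-and-subtract decomposition followed by bounding the signed residual with its absolute value, using positivity of the multiplying factor ($f$ in the first inequality, $p$ in the second) and monotonicity of the integral. Your write-up is in fact slightly more explicit than the paper's, which simply displays the two-line decomposition and bound without commentary.
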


\begin{proof}
\begin{equation}
\begin{aligned}
\int_{\mathcal{Y}} p(y) f(y) \mathrm{d} y &= \int_{\mathcal{Y}} q(y) f(y) \mathrm{d} y + \int_{\mathcal{Y}} [p(y) - q(y)] f(y) \mathrm{d} y
\\
&\leq \int_{\mathcal{Y}} q(y) f(y) \mathrm{d} y + \int_{\mathcal{Y}} |p(y) - q(y)| f(y) \mathrm{d} y
\\
\int_{\mathcal{Y}} p(y) f(y) \mathrm{d} y &=
\int_{\mathcal{Y}} p(y) g(y) \mathrm{d} y + \int_{\mathcal{Y}} p(y) [f(y) - g(y)] \mathrm{d} y
\\
&\leq \int_{\mathcal{Y}} p(y) g(y) \mathrm{d} y + \int_{\mathcal{Y}} p(y) |f(y) - g(y)| \mathrm{d} y
\nonumber
\end{aligned}
\end{equation}
\end{proof}

\noindent
After proving Corollary~\ref{prop:prob_ineq}, we return to prove the theorem.

\begin{theorem}
Assume the loss value on the training set satisfies $\frac{1}{|\mathcal{S}|} \sum_{(\boldsymbol{x}_i, y_i, a_i) \in \mathcal{S}} \\ l(\boldsymbol{h}_i, y_i;  \theta_h) \leq \epsilon$\footnote{$\epsilon$ is small if the classifier head $f_h$ has been well-trained on the annotated dataset $\mathcal{S}$.}, 
and $l(\boldsymbol{h}, y; \theta_h)$ and $f_h$ satisfy $K_l$- and $K_h$-Lipschitz continuity\footnote{$l(\boldsymbol{h}, y;  \theta_h)$ and $f_h$ satisfy $|l(\boldsymbol{h}_i, y; \theta_h) - l(\boldsymbol{h}_j, y; \theta_h)| \leq K_l ||\boldsymbol{h}_{i} - \boldsymbol{h}_{j}||_2$ and $|p(y | \boldsymbol{x}_i) - p(y | \boldsymbol{x}_j)| \leq K_h ||\boldsymbol{h}_{i} - \boldsymbol{h}_{j}||_2$, respectively, where the likelihood function $p(y \mid \boldsymbol{x}_i) = \text{softmax}(f_h(\boldsymbol{h}_i | \theta_h))$.}, respectively.
The generalization loss difference between unprivileged group and privileged group has the following upper bound with probability $1-\gamma$, 
\begin{align}
&\bigg| \! \int_{\mathcal{X}_0} \!\! \int_{\mathcal{Y}} \! p(\boldsymbol{x},  y) l(\boldsymbol{h},  y;  \theta_h) \mathrm{d}\boldsymbol{x} \mathrm{d}y \!-\!\! \int_{\mathcal{X}_1} \!\! \int_{\mathcal{Y}} \! p(\boldsymbol{x},  y) l(\boldsymbol{h},  y;  \theta_h) \mathrm{d}\boldsymbol{x} \mathrm{d}y \bigg|
\nonumber
\\
&\quad\quad\quad\quad\quad\quad\quad\quad\quad \leq \epsilon + \min \Big\{ \sqrt{-L^2 \log \gamma (2 N_{\tilde{a}})^{-1}},  (K_l + K_h L) \delta_{\tilde{a}} \Big\}, 
\end{align}
where $\tilde{a} \!=\! \mathop{\arg\max}_{a \in \mathcal{A}}  \int_{\mathcal{X}_a}   \int_{\mathcal{Y}}  p(\boldsymbol{x}, y) l(\boldsymbol{h}, y; \theta_h) \mathrm{d}\boldsymbol{x} \mathrm{d}y$; $\mathcal{X}_a \!=\! \{\boldsymbol{x}_i \in \mathscr{D} | a_i \!=\! a \}$;
$\delta_{\tilde{a}} \!=\! \max_{\boldsymbol{x}_i \in \mathcal{X}_{\tilde{a}}} \!\! \min_{(\boldsymbol{x}_j, y_j, a_j) \in \mathcal{S}} \!\! || \boldsymbol{h}_i \!-\! \boldsymbol{h}_j ||_2$;  $N_{\tilde{a}} \!=\! |\{ (\boldsymbol{x}_i, y_i, a_i) | a_i \!=\! \tilde{a}, (\boldsymbol{x}_i, y_i, a_i) \!\in\! \mathcal{S} \}|$;
$L \!=\! \max_{(\boldsymbol{x}_i, y_i) \in \mathscr{U}} l(\boldsymbol{h}_i, y_i; \theta_h)$; 
and $\boldsymbol{h}_i = f_b(\boldsymbol{x}_i | \theta_b)$.

\end{theorem}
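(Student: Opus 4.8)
The plan is to first collapse the two-group difference into a single-group generalization loss, and then bound that single loss by the empirical training loss $\epsilon$ plus a generalization gap that I control in two independent ways, finally taking the smaller of the two. Introduce the shorthand $R_a := \int_{\mathcal{X}_a}\int_{\mathcal{Y}} p(\boldsymbol{x},y)\,l(\boldsymbol{h},y;\theta_h)\,\mathrm{d}\boldsymbol{x}\,\mathrm{d}y$ for the group-$a$ generalization loss. Since $R_0,R_1\ge 0$, the elementary inequality $|a-b|\le\max\{a,b\}$ for nonnegative $a,b$ yields $|R_0-R_1|\le\max\{R_0,R_1\}=R_{\tilde{a}}$, which is exactly why $\tilde{a}$ is defined as $\argmax_{a\in\mathcal{A}} R_a$. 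It then suffices to prove $R_{\tilde{a}}\le\epsilon+\min\{\sqrt{-L^2\log\gamma\,(2N_{\tilde{a}})^{-1}},\,(K_l+K_h L)\delta_{\tilde{a}}\}$, so from here on I work only with the worst group $\tilde{a}$.

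Next I would split $R_{\tilde{a}}$ into the empirical loss over the $N_{\tilde{a}}$ annotated instances of group $\tilde{a}$, which is at most $\epsilon$ by the training-loss assumption, plus a generalization gap. For the first (concentration) branch of the minimum, treat those $N_{\tilde{a}}$ annotated instances as i.i.d.\ draws from the group-$\tilde{a}$ distribution and regard each per-instance loss as a random variable supported in $[0,L]$ by the definition of $L=\max_{(\boldsymbol{x}_i,y_i)\in\mathscr{U}} l(\boldsymbol{h}_i,y_i;\theta_h)$. A one-sided Hoeffding inequality then gives, with probability at least $1-\gamma$, that the expected (generalization) loss exceeds its empirical counterpart by no more than $\sqrt{-L^2\log\gamma\,(2N_{\tilde{a}})^{-1}}$; this is the origin of the $1-\gamma$ guarantee and of the $N_{\tilde{a}}$-dependence that group selection shrinks in Remark~\ref{sec:rk1}.

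For the second (covering) branch I would invoke Corollary~\ref{prop:prob_ineq} to replace, for each test point $\boldsymbol{x}\in\mathcal{X}_{\tilde{a}}$, both the conditional density and the loss by their values at the nearest annotated neighbor $\boldsymbol{x}_j\in\mathcal{S}$, where $\|\boldsymbol{h}-\boldsymbol{h}_j\|_2\le\delta_{\tilde{a}}$. The first inequality of the corollary produces the density-mismatch term $\sum_y|p(y\mid\boldsymbol{x})-p(y\mid\boldsymbol{x}_j)|\,l(\boldsymbol{h}_j,y;\theta_h)$, bounded by $K_h L\,\delta_{\tilde{a}}$ through $K_h$-Lipschitz continuity of the likelihood together with the loss bound $L$; the second inequality produces the loss-mismatch term $\sum_y p(y\mid\boldsymbol{x})\,|l(\boldsymbol{h},y;\theta_h)-l(\boldsymbol{h}_j,y;\theta_h)|$, bounded by $K_l\,\delta_{\tilde{a}}$ through $K_l$-Lipschitz continuity of $l$ and $\sum_y p(y\mid\boldsymbol{x})=1$. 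Adding the two and integrating against $p(\boldsymbol{x})$ over $\mathcal{X}_{\tilde{a}}$ yields the $(K_l+K_h L)\delta_{\tilde{a}}$ gap, and combining the two branches with the $\epsilon$ offset completes the bound.

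The hard part will be the covering argument: making the nearest-neighbor substitution rigorous across the entire integral $\int_{\mathcal{X}_{\tilde{a}}}$ rather than at a single point, and cleanly separating the change in conditional density from the change in loss so that the two Lipschitz constants enter \emph{additively} as $K_l+K_h L$ rather than multiplicatively --- this is precisely what the two inequalities of Corollary~\ref{prop:prob_ineq} are designed to deliver. Secondary care is needed to argue that the empirical loss restricted to group $\tilde{a}$ is controlled by the global assumption $\frac{1}{|\mathcal{S}|}\sum_{(\boldsymbol{x}_i,y_i,a_i)\in\mathcal{S}} l(\boldsymbol{h}_i,y_i;\theta_h)\le\epsilon$, and to reconcile the i.i.d.\ hypothesis behind Hoeffding with the fact that $\mathcal{S}$ is grown adaptively by active selection.
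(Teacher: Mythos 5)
Your proposal follows essentially the same route as the paper's own proof: the reduction $|R_0-R_1|\le\max\{R_0,R_1\}=R_{\tilde{a}}$, a Hoeffding-type concentration bound over the $N_{\tilde{a}}$ annotated instances of the worst group for the first branch, and the nearest-neighbor covering argument via Corollary~\ref{prop:prob_ineq} (density mismatch bounded by $K_h L\,\delta_{\tilde{a}}$, loss mismatch by $K_l\,\delta_{\tilde{a}}$, giving the additive $(K_l+K_hL)\delta_{\tilde{a}}$ gap) for the second, combined by taking the minimum. The caveats you flag at the end --- controlling the group-$\tilde{a}$ empirical loss by the global $\epsilon$ assumption, and the i.i.d.\ hypothesis under adaptive selection --- are treated with exactly the same implicit leap in the paper's proof, so your plan does not diverge from it.
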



\begin{proof}
According to the upper bound of generalization error, the generalization error for group $\boldsymbol{x}\in\mathcal{X}_a$ for $\forall a \in \mathcal{A}$ is bounded with probability $1-\gamma$,
\begin{equation}
\label{eq:Hoeffding_ineq}
g_a \!=\! \int_{\mathcal{X}_a} \! \int_{\mathcal{Y}} p(\boldsymbol{x}, y) l(\boldsymbol{h}, y; \theta_h) \mathrm{d}\boldsymbol{x} \mathrm{d}y \!\leq\! \epsilon + \sqrt{-L^2 \log \gamma (2N_a)^{-1}},
\nonumber
\end{equation}
where $L \!=\! \max_{(\boldsymbol{x}_i, y_i) \in \mathscr{U}} l(\boldsymbol{h}_i, y_i; \theta_h)$.
Moreover, we consider the upper bound of absolute gap
$|g_0 - g_1| \leq \max_{a\in\mathcal{A}} g_a$.
The generalization error difference between the two groups is bounded with probability $1-\gamma$ as follow,
\begin{equation}
\begin{aligned}
\label{eq:gap_bound1}
&\bigg| \int_{\mathcal{X}_0} \! \int_{\mathcal{Y}} \! p(\boldsymbol{x}, \! y) l(\boldsymbol{h}, y; \theta_h) \mathrm{d}\boldsymbol{x} \mathrm{d}y \!-\!\! \int_{\mathcal{X}_1} \!\! \int_{\mathcal{Y}} \! p(\boldsymbol{x}, \! y) l(\boldsymbol{h}, y; \theta_h) \mathrm{d}\boldsymbol{x}  \mathrm{d}y \bigg|
\\
&\quad\quad\quad\quad\quad\quad\quad\quad\quad \leq \epsilon + \sqrt{-L^2 \log \gamma (2N_{\tilde{a}})^{-1}},
\end{aligned}
\end{equation}
where $\tilde{a} = \arg \max_{a \in \mathcal{A}} \int_{\mathcal{X}_a} \int_{\mathcal{Y}} p(\boldsymbol{x}, y) l(\boldsymbol{h}, y; \theta_h) \mathrm{d}\boldsymbol{x} \mathrm{d}y$.

To prove the second bound of the generalization error difference, let
$\mathcal{N}(\boldsymbol{x}_i)$ denote the nearest neighbour of $\boldsymbol{x}_i \in \mathcal{X}$ which belongs to the annotated dataset, i.e. $\mathcal{N}(\boldsymbol{x}_i) = \arg \min_{(\boldsymbol{x}_j, y_j, a_j) \in \mathcal{S}} || \boldsymbol{h}_{j} - \boldsymbol{h}_{i} ||_2$; 
let $\boldsymbol{h}_i^{\mathcal{N}}$ denote the embedding of $\mathcal{N}(\boldsymbol{x}_i)$;
and let $d_{\boldsymbol{x}_i}$ denote the distance between $\boldsymbol{x}_i$ and $\mathcal{N}(\boldsymbol{x}_i)$ in the embedding space, i.e. $d_{\boldsymbol{x}_i} = \min_{(\boldsymbol{x}_j, y_j, a_j) \in \mathcal{S}} || \boldsymbol{h}_i - \boldsymbol{h}_j ||_2$. 
Accoding to Corollary~\ref{prop:prob_ineq}, with $p(y) = p(y | \boldsymbol{x}_i)$, $q(y) = p(y | \mathcal{N}(\boldsymbol{x}_i))$ and $f(y) = l(\boldsymbol{h}_i, y; \theta_h)$, the generalization error can be bounded by 
\begin{equation}
\begin{aligned}
\label{eq:error_upper_bound}
&\int_{\mathcal{Y}} p(y | \boldsymbol{x}_i) l(\boldsymbol{h}_i, y; \theta_h) \mathrm{d}y \leq 
\\
&\int_{\mathcal{Y}} p(y | \mathcal{N}(\boldsymbol{x}_i)) l(\boldsymbol{h}_i, y; \theta_h) \mathrm{d}y 
+ \int_{\mathcal{Y}} \big| p(y | \boldsymbol{x}_i) - p(y | \mathcal{N}(\boldsymbol{x}_i)) \big| l(\boldsymbol{h}_i, y; \theta_h) \mathrm{d}y.
\end{aligned}
\end{equation}

Note that the classifier head $f_h$ satisfies $K_h$-Lipschitz continuity $|p(y | \boldsymbol{x}_i) - p(y | \mathcal{N}(\boldsymbol{x}_i))| \leq K_h | ||\boldsymbol{h}_{i} - \boldsymbol{h}_i^\mathcal{N}||_2 = K_h d_{\boldsymbol{x}_i}$ and $l(\boldsymbol{h}, y; \theta_h) \leq L$, the second term in the right-side of Equation~(\ref{eq:error_upper_bound}) is bounded by
\begin{equation}
\label{eq:upper_bound1}
\int_{\mathcal{Y}} \big| p(y | \boldsymbol{x}_i) - p(y | \mathcal{N}(\boldsymbol{x}_i)) \big| l(\boldsymbol{h}_i, y; \theta_h) \mathrm{d}y \leq K_h L d_{\boldsymbol{x}_i}.
\end{equation}
Furthermore, taking $p(y) \!=\! p(y | \mathcal{N}(\boldsymbol{x}_i))$, $f(y) = l(\boldsymbol{h}_i, y; \theta_h)$ and $g(y) = l(\boldsymbol{h}_i^{\mathcal{N}}, y; \theta_h)$ into Corollary~\ref{prop:prob_ineq}, we have the first term in the right-side of Equation~(\ref{eq:error_upper_bound}) can be bounded by
\begin{equation}
\begin{aligned}
\label{eq:upper_bound2}
&\int_{\mathcal{Y}} \!\! p(y | \mathcal{N}(\boldsymbol{x}_i)) l(\boldsymbol{h}, \! y; \! \theta_h) \mathrm{d}y \! \leq 
\\
&\! \int_{\mathcal{Y}} \!\! p(y | \mathcal{N}(\boldsymbol{x}_i)) l(\boldsymbol{h}_i^{\mathcal{N}}, y; \theta_h) \mathrm{d}y \! + \!\!\! \int_{\mathcal{Y}} \!\! p(y | \mathcal{N}(\boldsymbol{x}_i)) | l(\boldsymbol{h}, \! y; \! \theta_h) \!-\! l(\boldsymbol{h}_i^{\mathcal{N}}, y; \theta_h) | \mathrm{d}y 
\\
&\leq \epsilon \!+\! K_l d_{\boldsymbol{x}_i} \!,
\end{aligned}
\end{equation}
where we have $\int_{\mathcal{Y}} p(y | \mathcal{N}(\boldsymbol{x}_i)) l(\boldsymbol{h}_i^{\mathcal{N}}, y; \theta_h) \mathrm{d}y \leq \epsilon$ due to the upper bound of training error; and we have
\begin{equation}
\int_{\mathcal{Y}} p(y | \mathcal{N}(\boldsymbol{x}_i)) | l(\boldsymbol{h}_i, y; \theta_h) - l(\boldsymbol{h}_i^{\mathcal{N}}, y; \theta_h) | \mathrm{d}y \leq K_l d_{\boldsymbol{x}_i},
\end{equation}
due to the $K_l$-Lipschitz continuity of the loss function.

Taking Equations~(\ref{eq:upper_bound1}) and (\ref{eq:upper_bound2}) into Equation~(\ref{eq:error_upper_bound}), the generalization error on group $\boldsymbol{x} \in \mathcal{X}_a$ can be bounded by
\begin{equation}
\begin{aligned}
\int_{\mathcal{X}_a} \int_{\mathcal{Y}} p(\boldsymbol{x}, y) l(\boldsymbol{h}_i, y; \theta_h) \mathrm{d}y \mathrm{d}\boldsymbol{x} \leq \epsilon + (K_l + K_h L) \delta_a,
\end{aligned}
\end{equation}
where $\delta_a = \max_{\boldsymbol{x}_i \in \mathcal{X}_a} d_{\boldsymbol{x}_i} = \max_{\boldsymbol{x}_i \in \mathcal{X}_a} \min_{\boldsymbol{x}_j \in \mathcal{S}} || \boldsymbol{h}_i - \boldsymbol{h}_j ||_2$ denotes the max-min distance between the unannotated and annotated instances in the embedding space.
Note that $a \in \mathcal{A}$, we take $\tilde{a} = \arg \max_{a \in \mathcal{A}} \int_{\mathcal{X}_a} \int_{\mathcal{Y}} p(\boldsymbol{x}, y) l(\boldsymbol{h}, y; \theta_h) \mathrm{d}y \mathrm{d}\boldsymbol{x}$.
The generalization error difference between the two groups can be bounded by
\begin{equation}
\begin{aligned}
\label{eq:gap_bound2}
&\bigg| \int_{\mathcal{X}_0} \! \int_{\mathcal{Y}} \! p(\boldsymbol{x}, y) l(\boldsymbol{h}, y; \theta_h) \mathrm{d}\boldsymbol{x} \mathrm{d}y \!-\!\! \int_{\mathcal{X}_1} \! \int_{\mathcal{Y}} \! p(\boldsymbol{x}, y) l(\boldsymbol{h}, y; \theta_h) \mathrm{d}\boldsymbol{x}  \mathrm{d}y \bigg|
\leq \epsilon + (K_l + K_h L) \delta_{\tilde{a}}.
\end{aligned}
\end{equation}
Combine Equation~(\ref{eq:gap_bound2}) with (\ref{eq:gap_bound1}), we have the generalization error gap between group $x \in \mathcal{X}_0$ and group $x \in \mathcal{X}_1$ bounded as follow with probability $1-\gamma$,
\begin{equation}
\begin{aligned}
\label{eq:gap_bound3}
&\bigg| \int_{\mathcal{X}_0} \! \int_{\mathcal{Y}} \! p(\boldsymbol{x}, y) l(\boldsymbol{h}, y; \theta_h) \mathrm{d}\boldsymbol{x} \mathrm{d}y -\!\! \int_{\mathcal{X}_1} \! \int_{\mathcal{Y}} \! p(\boldsymbol{x}, y) l(\boldsymbol{h}, y; \theta_h) \mathrm{d}\boldsymbol{x}  \mathrm{d}y \bigg|
\\
&\quad\quad\quad\quad\quad\quad\quad\quad\quad\quad\quad\quad \leq \epsilon + \min \Big\{ \sqrt{-L^2 \log \gamma (2N_{\tilde{a}})^{-1}}, (K_l + K_h L) \delta_{\tilde{a}} \Big\}.
\nonumber
\end{aligned}
\end{equation}
\end{proof}

\begin{table*}
\centering
\caption{Details about the datasets.}
\tiny
\begin{tabular}{l|c|c|c|c|c}
    \hline
     & Adult
     & MEPS 
     & Loan default
     & German credit
     & CelebA \\
    \hline
     Domain & Social & Medical & Financial & Financial & Social \\
     Data formate & Tabular & Tabular & Tabular & Tabular & Image \\
     Predicted attribute & Salary &  Utilization & Defaulting & Credit & Wavy hair, Young \\
     Sensitive attribute & Gender & Race & Age & Age & Gender \\
     Number of instance & 30162 & 15830 & 30000 & 4521 & 5000 \\ 
     Number of attribute & 13 & 41 & 8 & 16 & 160$\times$160 \\
     Train, Validate, Test splitting & 0.25, 0.25, 0.5 & 0.25, 0.25, 0.5 & 0.25, 0.25, 0.5 & 0.25, 0.25, 0.5 & 0.25, 0.25, 0.5 \\
     Annotation budget & 4\textperthousand & 8\textperthousand & 4\textperthousand & 2\% & 3\% \\
    \hline
\end{tabular}
\label{tb:dataset_info}
\end{table*}

\begin{table*}
\centering
\caption{Detailed hyper-parameter setting.}
\tiny
\label{tb:hyperparameter_setting}
\begin{tabular}{l|c|c|c|c|c|c}
    \hline
     & Adult & MEPS & Loan default & German Credit & CelebA-hair & CelebA-young \\
    \hline
     Classifier body $f_b$ & Perceptron & Perceptron & Perceptron & Perceptron & ResNet-18 & ResNet-18 \\
     Classifier head $f_h$ & 2-layer MLP & 3-layer MLP & 2-layer MLP & 3-layer MLP & 3-layer MLP & 3-layer MLP \\
     Classifier head $f_a$ & Perceptron & Perceptron & Perceptron & Perceptron & Perceptron & Perceptron \\
     Embedding dim $M$ & 64 & 32 & 64 & 32 & 256 & 256 \\
     Hidden-layer dim & 32 & 32 & 32 & 32 & 64 & 128 \\
    \hline
\end{tabular}
\label{tb:hyperpara}
\end{table*}

\section{Details about the Datasets}

\label{sec:dataset_appendix}


The experiments are conducted on the MEPS\footnote{\scriptsize \url{https://github.com/Trusted-AI/AIF360/tree/master/aif360/data/raw/meps}}, Loan default\footnote{\scriptsize \url{https://archive.ics.uci.edu/ml/datasets/default+of+credit+card+clients}}, German credit\footnote{\scriptsize \url{https://archive.ics.uci.edu/ml/datasets/statlog+(german+credit+data)}}, Adult\footnote{\scriptsize \url{http://archive.ics.uci.edu/ml/datasets/Adult}} and CelebA\footnote{\scriptsize \url{http://mmlab.ie.cuhk.edu.hk/projects/CelebA.html}} datasets to demonstrate the proposed framework is effective to mitigate the socially influential bias such as the gender, race or age bias.
The statistics of the datasets is given in Table~\ref{tb:dataset_info}.
The details about the datasets including the size and spliting of the datasets, the predicted and sensitive attributes, and the annotation budget are described as follows.

\begin{itemize}[leftmargin=10pt]

\item[$\bullet$]  \textbf{MEPS}:
The task on this dataset is to predict whether a person would have a \emph{high} or \emph{low} utilization based on other features~(\emph{region, marriage, etc.}).
The \emph{Race} of each person is the sensitive attribute, where the two sensitive groups are \emph{white} and \emph{non-white}~\cite{cohen2002medical}.
The vanilla model shows discrimination towards the non-white group.
The annotation budget is 8\textperthousand~\footnote{The cost of annotating 8\textperthousand\ of training instances is affordable.}. 

\item[$\bullet$]  \textbf{Loan default}:
The task is to predict whether a person would default the payment of loan based on personal information~(\emph{Bill amount, education, etc.}), where the sensitive attribute is \emph{age}, and the two sensitive groups are people \emph{above 35} and those \emph{below 35}~\cite{bellamy2018ai}.
The vanilla trained model shows discrimination towards the younger group.
The annotation budget is 4\textperthousand.

\item[$\bullet$]  \textbf{German credit}: 
The goal of this dataset is to predict whether a person has \emph{good} or \emph{bad} credit risks based on other features~(\emph{balance, job, education, etc.}).
\emph{Age} is the sensitive attribute, where the two sensitive groups are people \emph{older than 35} and those \emph{not older than 35}~\cite{UCI:2017}.
The vanilla trained model shows discrimination towards the younger group.
The annotation budget is 2\%.

\item[$\bullet$]  \textbf{Adult}:
The task for this dataset is to predict whether a person has \emph{high} (more than 50K/yr) or \emph{low} (less than 50K/yr) income based on other features~(\emph{education, occupation, working hours, etc.}).
\emph{Gender} is considered as the sensitive attribute for this dataset~\cite{UCI:2007}.
Thus, we have two sensitive groups \emph{male} and \emph{female}.
The vanilla trained classification model shows discrimination towards the female group.
The annotation budget is 4\textperthousand.

\item[$\bullet$]  \textbf{CelebA}:
This is a large-scale image dataset of human faces~\cite{liu2015deep}. 
We consider two tasks for this dataset:
i) identifying whether a person has wavy hair; 
ii) identifying whether a person is young.
\emph{Gender} is the sensitive feature, where the two sensitive groups are \emph{male} and \emph{female}. 
The vanilla trained model shows discrimination towards male in task i) and female in tasks ii), respectively.
The annotation budget is 3\%.

\end{itemize}

\section{Implementation Details}

\label{sec:implement_appendix}

The experiment on each dataset follows the pipeline of \textbf{pre-training}, \textbf{debiasing}, and \textbf{head-selection}.
Each step is shown as follows.

\noindent
\textbf{Pre-training}: We pre-train $f_b (\bullet \mid \theta_b)$ to minimize the contrastive loss on the whole training set without any annotations for 50 epochs; and pre-train $f_h( \boldsymbol{h} \mid \theta_h )$ for 10 epochs to minimize the cross-entropy $\text{CE}(\hat{y}, y)$; then pre-train $f_b (\boldsymbol{h} \mid \theta_b)$ for 10 epochs to minimize the cross-entropy $\text{CE}(\hat{a}, a)$, where the initial sensitive annotations are very few~(less than 10), randomly selected from each group.
$\theta_b, \theta_h$ and $\theta_a$ provide initial solutions for the bias mitigation.

\noindent
\textbf{Debiasing}: We adopt \Algnameabbr{} to debias the classifier head $f_h (\bullet \mid \theta_h)$ for several iterations.
Specifically, the number of iterations equals the available annotation number, where \Algnameabbr{} selects one instance for annotation, debiases $f_h (\bullet \mid \theta_h)$ and retrains $f_a (\bullet \mid \theta_a)$ for 10 epochs in each iteration, and back up the checkpoint of $\theta_h$ and $\theta_a$ in the last epoch of each iteration.
In the Pre-training and Debiasing stages, the parameters $\theta_b, \theta_h$ and $\theta_a$ are updated using the Adam optimizer with a learning rate of $10^{-3}$, mini-batch size 256 and a dropout probability of $0.5$.
The DNN architectures and detailed hyper-parameter settings on different datasets are given in Appendix~\ref{sec:hyper_param_appendix}.

\noindent
\textbf{Head-selection}: We use the trained $f_a(f_b (\bullet \mid \theta_b) \mid \theta_a)$ to generate the proxy sensitive annotations for the validation dataset so that the fairness metrics can be estimated on the validation dataset.
The optimal debiased classifier head $f_h$ is selected to maximize the summation of accuracy and fairness score on the validation dataset.
We merge the selected $f_h$ with the pre-trained $f_b$ and test the classifier $f_h(f_b (\bullet \mid \theta_b) \mid \theta_h)$ on the test dataset.
This pipeline is executed five times to reduce the effect of randomness, and the average testing performance and the standard deviation are reported in the remaining sections.

\section{Detailed Hyper-parameter Setting}

\label{sec:hyper_param_appendix}

The detailed hyper-parameter setting is given in Table~\ref{tb:hyperparameter_setting}.

\section{Details about the Baseline Methods}
\label{sec:baseline_appendix}

We introduce details on the baseline methods in this section.

\begin{itemize}[leftmargin=5pt]

    
    \item[$\bullet$]  \textbf{Group DRO}: Group DRO maintains a distribution $\boldsymbol{q} = [q_0, q_1]$ over the sensitive groups $a \in \mathcal{A}$, and updates the classifier $f(\bullet \mid \theta_f)$ via the min-max optimization given by
    \begin{equation}
    \theta_f = \arg \min_{\theta} \max_{\boldsymbol{q}} \sum_{a \in \mathcal{A}} \frac{q_a}{N_a} \sum_{(\boldsymbol{x}_i, y_i) \sim \mathscr{D}_a} l(\boldsymbol{x}_i, y_i; \theta),
    \end{equation}
    where $\mathscr{D}_a = \{ \boldsymbol{x}_i, y_i \mid a_i = a \}$ depends on fully-annotated training set to generate the sensitive groups.
     
    
    \item[$\bullet$]  \textbf{FAL}: Original FAL depends on the annotation of sensitive attribute to have active instance selection.
    Hence, we consider an improved version of the original framework to adapt to the problem in this work.
    Specifically, our improved FAL updates the classifier to minimize the cross-entropy loss on the annotated dataset.
    The annotated instances are selected by
    \begin{equation}
    (\boldsymbol{x}^*, y^*) = \arg \!\! \max_{(x,y) \in \mathscr{U}} \!\! \alpha \text{ACC} (f_t) + (1 \!-\! \alpha) [ \mathcal{F}(f_t) \!-\! \mathcal{F}(f_{t-1}) ],
    \end{equation}
    where $f_t$ denotes the classifier learned on the annotated dataset $\mathcal{S}$; $\mathcal{F}(f_{t})$ denotes the fairness score of classifier $f_{t}$, which is the value of Equalized Odds in our experiment; $\alpha$ controls the trade-off between accuracy and fairness; and we have $\alpha$ in the range of $[0.5, 1]$ in our experiments.
    
    
    \item[$\bullet$]  \textbf{LfF}: LfF adopts generalized cross entropy loss to learn the biased model $f_B$ to provide proxy annotation, and simultaneously learn the debiased model $f_D$ towards  minimizing the cross entropy re-weighted by the proxy annotation.
    $f_B$ and $f_D$ are updated by
    \begin{equation}
    \begin{aligned} 
    \theta_{B}^* &= \min_{\theta_B} \sum_{i=1}^N \frac{1-p(\boldsymbol{x}_i; \theta_B)^q}{q},
    \\
    \theta_D^* &= \arg \min_{\theta_D} \sum_{i=1}^N \frac{l(\boldsymbol{x}_i, \hat{y}; \theta_B) l(\boldsymbol{x}_i, \hat{y}; \theta_D)}{l(\boldsymbol{x}_i, \hat{y}; \theta_B) + l(\boldsymbol{x}_i, \hat{y}; \theta_D)},
    \end{aligned} 
    \end{equation}
    where we control the hyper-parameter $q$ in the range of $[2.5, 3]$ in our experiments.
    
    \item[$\bullet$]  \textbf{SSBM}: This method initially randomly select a subset for annoatation, then adopts POD for the bias mitigation.
    
    \item[$\bullet$]  \textbf{POD+RS}: Different from SSBM, this method randomly selects an annotated instance and adopts POD for bias mitigation in each iteration.
    The random instance selection and POD executes iteratively.
    This method is designed for studying the effect of annotation ratio to the mitigation performance.
    

    \item[$\bullet$]  \textbf{POD+AL}: This method adopts POD for bias mitigation.
    Different from \Algnameabbr{}, the annotated instances are selected by uncertainty sampling.
    Specifically, we calculate the Shannon entropy of the model prediction for each instance in the unannotated dataset.
    For $\boldsymbol{x}_i \in \mathscr{U}$, we have the entropy given by
    \begin{equation}
        \mathscr{H}(\boldsymbol{x}_i) = -p_{\hat{y}_i=1} \log_2 p_{\hat{y}_i=1} - p_{\hat{y}_i=0} \log_2 p_{\hat{y}_i=0}.
    \end{equation}
    where $[p_{\hat{y}_i=1}, p_{\hat{y}_i=0}] = \text{softmax} [ f(\boldsymbol{h}_i | \theta_h) ]$; and $f(\boldsymbol{h}_i | \theta_h) \in \mathcal{Y}$.
    The instance for annotation is selected by 
    \begin{equation}
        (\boldsymbol{x}^*, y^*) = \arg \max_{(\boldsymbol{x}_i, y_i) \in \mathscr{U}} \mathscr{H}(\boldsymbol{x}_i).
    \end{equation}

    \item[$\bullet$]  \textbf{POD+CA}: This method adopts POD for bias mitigation.
    Different from \Algnameabbr{}, POD+CA selects the instance for annotation following the max-min rule given by
    \begin{equation}
    (\boldsymbol{x}^*, y^*) = \arg \max_{\boldsymbol{x}_i \in \mathscr{U}} \min_{\boldsymbol{x}_j \in \mathcal{S}} || \boldsymbol{h}_i - \boldsymbol{h}_j ||_2,
    \end{equation}
    where $\mathcal{S}$ and $\mathscr{U}$ denote the annotated and unannotated datasets, respectively.

\end{itemize}

\end{document}